\documentclass[11pt]{article}

\usepackage[margin=1in]{geometry}
\usepackage{amsmath,amsfonts,amssymb,amsthm}
\usepackage{mathtools}
\usepackage{bm}
\usepackage{booktabs}
\usepackage{float}
\usepackage{graphicx}
\usepackage{algorithm}
\usepackage{algpseudocode}
\usepackage[round]{natbib}
\usepackage{hyperref}

\hypersetup{
    colorlinks=true,
    linkcolor=blue,
    citecolor=blue,
    urlcolor=blue,
}

\newtheorem{theorem}{Theorem}
\newtheorem{lemma}{Lemma}
\newtheorem{assumption}{Assumption}
\newtheorem{remark}{Remark}

\title{Robust Active Learning for Few-Shot Example Selection in Text-to-SQL}

\author{%
  Arash Pourhabib \\
  NVIDIA \\
  \texttt{apourhabib@nvidia.com}
}

\begin{document}

\maketitle

\begin{abstract}
Few-shot example retrieval is the dominant paradigm for grounding large language models (LLMs) in domain-specific text-to-SQL systems. However, the quality of the annotated example bank directly governs system accuracy, and expert annotation is prohibitively expensive. We formalize the active selection of these examples as a constrained experimental design problem over the intrinsic, low-dimensional manifold of semantic query embeddings. Unlike standard active learning frameworks, our setting introduces three critical challenges: varying, query-dependent annotation reliability (heteroscedasticity), strict requirements for spatial diversity across semantic topics (partition matroid constraints), and the inherent reality that the true covariance structure of the embedding space is unknown (misspecification). To address these, we propose a stratified greedy algorithm that maximizes a heteroscedastic mutual information objective. We prove that this objective remains submodular and approximately monotonic on the intrinsic manifold, yielding a theoretical constant-factor approximation guarantee. We establish a spectral bound demonstrating that this approximation guarantee degrades gracefully, rather than catastrophically, when the assumed surrogate kernel diverges from the true underlying data-generating process. Empirical results demonstrate that the proposed strategy significantly reduces labeling effort while maintaining high text-to-SQL retrieval accuracy.
\end{abstract}

\noindent\textbf{Keywords:} active learning; few-shot example selection; text-to-SQL; Gaussian processes; submodular optimization; partition matroid; experimental design

\section{Introduction}

Text-to-SQL systems translate natural language questions into executable SQL, letting non-technical users query relational databases without writing code \citep{bird2024}. Most recent text-to-SQL systems use Large Language Models (LLMs) with in-context learning. In this paradigm, a retrieval module fetches $K$ annotated, domain-specific examples $(q_i, s_i)$ to guide the model's generation at inference time \citep{dail2024}. This is a setting first systematically studied in cross-domain benchmarks such as Spider \citep{yu2018spider}. The quality of retrieved exemplars directly determines whether the LLM generates correct SQL \citep{dail2024}. A weak example bank cripples even the most capable base LLM. Yet annotating SQL for large, complex enterprise schemas is expensive: expert time costs money, and schemas can contain hundreds of tables. Given a pool of unannotated historical queries, which subset should be sent to human experts to maximize the retrieval system's generalization accuracy?

We treat this as a Gaussian process (GP) experimental design problem \citep{rasmussen2006gp, srinivas2010gaussian}. Rather than selecting examples heuristically, we model uncertainty over the LLM's performance and choose queries that maximally reduce it. GPs have been used to model uncertainty in LLM embedding spaces \citep{gollum2025}. We extend this idea: we formalize the active selection of few-shot examples as a sequential experimental design problem over the semantic embedding space of natural language questions. We model the LLM's expected SQL correctness as a stochastic process over the embedding space. This lets us identify which query regions, once annotated, will most reduce prediction error.

Applying active learning to prompt selection is not straightforward. Three challenges arise:
\begin{itemize}
    \item \textbf{Heteroscedasticity:} Not all natural language queries are the same level of  difficulty for SQL generation. The inherent ambiguity of human language and the varying reliability of the base LLM introduce localized, query-dependent noise $\sigma^2(x)$. Standard homoscedastic experimental design frameworks fail here, as they risk wasting the annotation budget on unresolvable queries or those with exceptionally high annotator disagreement.
    \item \textbf{Structural Diversity via Matroids:} To prevent the acquisition algorithm from redundantly over-sampling questions within a single dense semantic topic, we must enforce strict spatial diversity. We require that the selected examples span $K$ distinct semantic clusters, formally transforming the selection into a constrained optimization problem subject to a \textit{partition matroid}.
    \item \textbf{Manifold Inputs and Misspecification:} LLM embeddings reside in high-dimensional ambient spaces (e.g., $\mathbb{R}^{2048}$). However, valid, meaningful queries lie on a much lower-dimensional intrinsic manifold $\mathcal{M}$. We do not know the true covariance structure of the correctness surface. Our acquisition strategy must therefore tolerate kernel misspecification: if we choose the wrong kernel family, performance should degrade gracefully.
\end{itemize}

To solve this, we propose a stratified greedy algorithm that maximizes a heteroscedastic mutual information objective \citep{krause2008near}. We theoretically prove that this information-theoretic objective retains its submodularity under query-dependent noise and is approximately monotonic on the intrinsic manifold. Using submodular maximization bounds, we establish a theoretical constant-factor approximation guarantee for the matroid-constrained selection. Motivated by the prediction properties of Kriging \citep{wang2020prediction}, we introduce a spectral mismatch bound demonstrating that our greedy strategy degrades gracefully, rather than failing catastrophically, when the assumed surrogate kernel diverges from the true underlying data-generating process. Ultimately, our framework provides a mathematically rigorous, sample-efficient pathway to bootstrapping enterprise text-to-SQL systems.

\section{Problem Formulation}

Domain-specific text-to-SQL parsing with LLMs depends on retrieving high-quality, diverse few-shot examples. But building a representative set of annotated $(q, s)$ pairs, where $q$ is a natural language question and $s$ is its SQL query, is expensive because expert annotation doesn't scale. We formalize the selection of these examples not as a standard retrieval task, but as a sequential experimental design problem. We want to identify the smallest subset of questions worth labeling, which are the ones that will cut the model's uncertainty across the domain most effectively.

\subsection{Data Generating Process and Heteroscedastic Noise}
Let $\mathcal{Q}$ be the discrete space of all possible natural language questions within a specific database domain, and let $\phi: \mathcal{Q} \to \mathbb{R}^d$ be a fixed, pre-trained text embedding map (e.g., $d=2048$ for standard LLM embeddings). We assume the semantic representations of valid queries do not fill this ambient space uniformly, but rather lie on a low-dimensional intrinsic manifold $\mathcal{M} \subset \mathbb{R}^d$. 

For any embedded query $x = \phi(q) \in \mathcal{M}$, we define the expected SQL accuracy as an unknown, fixed deterministic function $f: \mathcal{M} \to [0,1]$. This function acts as a latent response surface, capturing how well the base LLM can correctly generate executable SQL for a query located at $x$ in the semantic space. SQL correctness is a graded quantity rather than a binary indicator: a generated query may match the target on some clauses (e.g., the correct table set and column projection) while diverging on others (e.g., missing a join predicate or using the wrong filter), so $f(x)$ varies continuously over $[0,1]$. Section~\ref{sec:experiments} makes this concrete via a component-match score that awards partial credit per SQL clause.

When an expert annotates a query and it is evaluated through the LLM, the observed SQL accuracy $y_i \in [0,1]$ is subjected to noise. In text-to-SQL, this noise is far from uniform. Simple aggregations like ``count the active users'' produce consistent annotations. Complex queries (those requiring implicit joins across tables, nested subqueries, or ambiguous references) show much higher variance, both in human annotation and in LLM output. We therefore use a heteroscedastic noise model:
\begin{equation}
    y_i = f(x_i) + \epsilon(x_i), \quad \epsilon(x_i) \sim \mathcal{N}(0, \sigma^2(x_i))
\end{equation}
where $\sigma^2(x_i)$ represents the localized uncertainty (irreducible, inherent noise) at query $x_i$.

\subsection{Gaussian Process Prior}
The true correctness surface $f$ is deterministic but unknown. We place a Gaussian Process (GP) prior over it: $f \sim \mathcal{GP}(\mu(\cdot), \mathcal{K}(\cdot, \cdot))$, with prior mean $\mu$ and covariance kernel $\mathcal{K}$.

At any step $n$, let $D_n = \{(x_1, y_1), \dots, (x_n, y_n)\}$ denote the current dataset of annotated examples. At any unobserved query $x \in \mathcal{M}$, the GP yields a posterior predictive distribution. The posterior belief over the true function value is normally distributed:
\begin{equation}
    f(x) \mid D_n \sim \mathcal{N}(\hat{f}_n(x; D_n), \hat{\sigma}_n^2(x))
\end{equation}
where $\hat{f}_n(x; D_n)$ is the posterior mean and $\hat{\sigma}_n^2(x)$ is the posterior \textit{epistemic} variance. This variance captures uncertainty about SQL accuracy at $x$, which shrinks as we label more examples.

\begin{remark}[GP support and bounded objectives]
The GP prior places mass on $\mathbb{R}$-valued functions, while $f$ takes values in $[0,1]$. Adopting a Gaussian-likelihood GP as a tractable surrogate for a bounded objective is standard practice in Bayesian optimization and sequential design \citep{srinivas2010gaussian, snoek2012practical, bergstra2011hyperparam}, where classifier accuracy and similar bounded metrics are routinely modeled with a real-valued GP. The support mismatch is structurally inert for our purposes because the acquisition function in Algorithm~\ref{alg:greedy} depends only on the posterior covariance and the noise diagonal $\Lambda$, not on observed values $y$. A formally bounded variant such as a warped GP with monotonic transform onto $[0,1]$ \citep{snelson2004warped} could be substituted without affecting the matroid-constrained greedy selection.
\end{remark}

\subsection{Heteroscedastic Experimental Design via Mutual Information}

Our objective in active learning is to select an optimal subset of embedded queries $\mathcal{S}$ from a larger candidate pool $\mathcal{C}$ to be annotated, such that we minimize the predictive uncertainty of our retrieval system across the real-world query distribution. A natural formulation for this design problem is to minimize the conditional entropy of the unobserved locations given the selected queries, denoted as $H(Y_{\mathcal{C} \setminus \mathcal{S}} \mid Y_{\mathcal{S}})$. Here, $Y$ represents the response variables. This conditional entropy formally quantifies the expected remaining uncertainty in the responses for the unannotated queries, defined as:
\begin{equation}
    H(Y_{\mathcal{C} \setminus \mathcal{S}} \mid Y_{\mathcal{S}}) = - \int p(\mathbf{y}_{\mathcal{C} \setminus \mathcal{S}}, \mathbf{y}_{\mathcal{S}}) \log p(\mathbf{y}_{\mathcal{C} \setminus \mathcal{S}} \mid \mathbf{y}_{\mathcal{S}}) d\mathbf{y}_{\mathcal{C} \setminus \mathcal{S}} d\mathbf{y}_{\mathcal{S}}
\end{equation}

However, following \citet{krause2008near}, relying solely on the conditional entropy criterion suffers from the ``boundary problem:'' minimizing remaining uncertainty pushes selected points to the extreme edges of the space, wasting their predictive coverage. We instead maximize the mutual information (MI) between the selected queries and the unobserved queries. The MI objective trades off individual uncertainty against central relevance, maximizing the expected reduction in global uncertainty:
\begin{equation}
    \mathcal{S}^* = \arg\max_{\mathcal{S} \subset \mathcal{C}} I(Y_{\mathcal{S}}; Y_{\mathcal{C} \setminus \mathcal{S}}) = \arg\max_{\mathcal{S} \subset \mathcal{C}} \Big( H(Y_{\mathcal{C} \setminus \mathcal{S}}) - H(Y_{\mathcal{C} \setminus \mathcal{S}} \mid Y_{\mathcal{S}}) \Big)
\end{equation}

Our problem setting departs from standard experimental design in four ways:

\begin{enumerate}
    \item \textbf{Heteroscedasticity:} Unlike standard active learning models that assume uniform observation noise, our base LLM and annotators exhibit varying levels of reliability. Modeling this localized noise as $\sigma^2(x)$ ensures that the acquisition function penalizes regions with high irreducible ambiguity, avoiding wasted annotation budget on queries with high disagreement.
\item \textbf{Matroid Constraints via Clustering:} To guarantee spatial diversity across different semantic topics, we stratify the candidate pool $\mathcal{C}$ into $K$ disjoint clusters $\{C_1, \dots, C_K\}$. Rather than a simple cardinality constraint ($|\mathcal{S}| \le K$), we restrict our selection to at most one query per cluster. Formally, we define a \textit{partition matroid} $M = (\mathcal{C}, \mathcal{I})$, where the family of independent sets $\mathcal{I}$ is defined as:
    \begin{equation}
        \mathcal{I} = \{\mathcal{S} \subseteq \mathcal{C} : |\mathcal{S} \cap C_i| \le 1 \text{ for all } i = 1, \dots, K\}
    \end{equation}
    By restricting our selection to valid sets $\mathcal{S} \in \mathcal{I}$, we transform the acquisition into a matroid-constrained optimization problem. Casting the clustering constraint as a partition matroid gives us access to approximation guarantees for greedy submodular maximization.
    \item \textbf{Manifold Inputs:} Query embeddings live in $\mathbb{R}^d$, and we define the GP over this ambient space. The candidate points $x = \phi(q)$ are constrained to an intrinsic manifold $\mathcal{M} \subset \mathbb{R}^d$, which determines where we can sample. Working on the manifold avoids the curse of dimensionality because our sample complexity scales with the intrinsic dimension of $\mathcal{M}$, not the ambient dimension $d$.
    \item \textbf{Misspecified Covariance Function:} In practical settings involving latent semantic spaces, we do not have access to the true underlying covariance function $\mathcal{K}^*$ governing the response surface. We are forced to rely on a user-specified, and therefore misspecified, assumed kernel $\mathcal{K}$. Following \citet{wang2020prediction} on Kriging prediction properties, our framework requires an acquisition strategy with guarantees that hold under spectral mismatches between the assumed and true covariance structures.
\end{enumerate}

The algorithm works as follows: we project the candidate query embeddings onto the intrinsic manifold $\mathcal{M}$ and group them into $K$ disjoint semantic clusters. We then find a subset $\mathcal{S}$ that maximizes the heteroscedastic mutual information, selecting at most one embedded query $x$ from each of the $K$ clusters, relying on an assumed surrogate kernel $\mathcal{K}$. This stratified selection ensures broad semantic coverage across the manifold while steering the model away from regions of high irreducible noise. 

While this stratified, heteroscedastic MI objective successfully captures our complex design requirements, finding the exact optimal subset $\mathcal{S}^*$ is computationally intractable. We formalize this hardness in the following theorem:

\begin{theorem}[NP-Completeness of Heteroscedastic MI with Matroid Constraints]
\label{thm:np_hard_hetero_mi}
Let $Y$ be a Gaussian Process defined on $\mathbb{R}^d$ with a heteroscedastic noise function $\sigma^2(x)$ and an assumed covariance function $\mathcal{K}$. Given a candidate pool $\mathcal{C}$ partitioned into disjoint subsets $\{C_1, \dots, C_K\}$, the problem of deciding whether there exists a subset $\mathcal{S} \subseteq \mathcal{C}$ satisfying the partition matroid constraint $|\mathcal{S} \cap C_i| \le 1$ for all $i \in \{1, \dots, K\}$ such that the mutual information $I(Y_{\mathcal{S}}; Y_{\mathcal{C} \setminus \mathcal{S}}) \ge M$ for some target value $M$ is NP-complete.
\end{theorem}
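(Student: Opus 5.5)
The proof has two parts: membership in NP, and NP-hardness by reduction from the known hardness of unconstrained (or cardinality-constrained) mutual-information maximization for Gaussian processes. The homoscedastic, cardinality-constrained MI sensor placement problem is NP-complete, as shown by \citet{krause2008near}, who reduce from a set-cover/clique-type problem via a carefully constructed covariance matrix. We embed that problem as a special case of ours.

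\textbf{Membership in NP.} The certificate is the set $\mathcal{S}$ itself. We check $|\mathcal{S}\cap C_i|\le 1$ for every $i$ in linear time. For Gaussian vectors,
\[
I(Y_{\mathcal{S}};Y_{\mathcal{C}\setminus\mathcal{S}})=\tfrac12\log\frac{\det\Sigma_{\mathcal{S}\mathcal{S}}\,\det\Sigma_{\bar{\mathcal{S}}\bar{\mathcal{S}}}}{\det\Sigma_{\mathcal{C}\mathcal{C}}},
\qquad \Sigma=\mathcal{K}(\mathcal{C},\mathcal{C})+\Lambda,
\]
where $\Lambda=\mathrm{diag}(\sigma^2(x))$. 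To avoid computing logarithms exactly, we compare $\det\Sigma_{\mathcal{S}\mathcal{S}}\det\Sigma_{\bar{\mathcal{S}}\bar{\mathcal{S}}}$ against $e^{2M}\det\Sigma_{\mathcal{C}\mathcal{C}}$. This requires the standard convention that kernel values and the threshold are given as rationals, with the target encoded as a rational bound on the determinant ratio. Determinants of rational matrices can be computed exactly in polynomial time by Bareiss elimination, so verification runs in polynomial time.

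\textbf{Hardness.} We reduce from the homoscedastic cardinality version: given a PSD matrix $\Sigma$ on $n$ points, an integer $k$, and a threshold $M$, decide whether some $|\mathcal{S}|\le k$ has MI at least $M$. The construction has four ingredients.
\begin{itemize}
\item[(i)] Take constant noise $\sigma^2(x)\equiv\sigma^2$, a legitimate special case of heteroscedastic noise. Absorb it into the instance by using $\Sigma-\sigma^2 I$ as the kernel, or simply note that the hard instances tolerate an added multiple of the identity.
\item[(ii)] Realize the given covariance as a kernel on points of $\mathbb{R}^d$. Either take $d=n$ and place the points at standard basis vectors with $\mathcal{K}(e_i,e_j)=\Sigma_{ij}$, extended to a valid kernel on $\mathbb{R}^d$ by a feature map, or take $\mathcal{K}(x,x')=x^\top x'$ with $x_i$ the rows of $\Sigma^{1/2}$.
\item[(iii)] Encode the cardinality constraint as a partition matroid. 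Make $k$ copies of the candidate pool, each copy forming one cluster $C_1,\dots,C_k$. Selecting the same original point twice would duplicate a variable, so to keep points distinct we instead use identical-kernel twins with a tiny added independent perturbation.
\item[(iv)] Alternatively, and more cleanly, use the one-to-one partition version directly. This avoids the copying issue: reduce from a problem that is natively one-per-group, such as a multicolored clique/independent set, by adapting Krause et al.'s gadget so that each color class is a cluster.
\end{itemize}

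The main obstacle is step (iii)/(iv). Partition constraints with singleton-per-cluster are \emph{not} obviously as expressive as a cardinality constraint. Moreover, MI is non-monotone, so the inequality $|\mathcal{S}|\le k$ versus exactly $k$ matters, as does the perturbation's effect on the threshold. I would try first the trivial embedding: put every candidate in its own cluster, so that $K=|\mathcal{C}|$ and the matroid is free. This reduces from \emph{unconstrained} MI maximization, whose NP-hardness follows from Krause et al.'s construction once the threshold and instance are arranged so that the optimum is attained at size exactly $k$.

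If that fails, I would fall back on the multicolored variant of their clique reduction. There, the covariance is built from a graph so that a large determinant-ratio gap separates sets forming a clique from all other sets. That gap is robust to polynomially small perturbations, so a separating rational threshold $M$ can be chosen.
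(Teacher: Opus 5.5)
Your proposal never fixes a reduction, and the step you flag as the main obstacle is exactly where the proof is missing.

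\textbf{The source problem is misstated.} It is not the one \citet{krause2008near} prove hard. Their theorem splits the sites as $V=S\cup U$, allows only $A\subseteq S$ with $|A|=k$, and keeps the unselectable set $U$ permanently inside $V\setminus A$; $U$ holds the Ko et al.\ variables, and $S$ holds their noisy copies. That is what makes $H(A\mid V\setminus A)=H(A\mid U)$ depend only on $|A|$, so the MI becomes $\tfrac12\log\det(\Sigma_{AA}+\tau^2 I)$ up to a constant. In $I(Y_{\mathcal{S}};Y_{\mathcal{C}\setminus\mathcal{S}})$ every pool point lies in some block and is selectable, so their problem is not a special case of yours.

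\textbf{Your candidate embeddings do not work as stated.} Singleton blocks, the route you would try first, fail on exactly these instances. With no cardinality bound, the set of all $n$ noisy copies has MI $\tfrac12\log\det(\Sigma+\tau^2 I)+\tfrac n2\log(1/\tau^2)$, which clears any threshold calibrated for size $k$; forcing the optimum to size $k$ is precisely what cannot be arranged. The copying in (iii) does not embed the MI instance either. Each selected twin leaves near-duplicates in $\mathcal{C}\setminus\mathcal{S}$ and picks up roughly $\tfrac12\log(1/\eta)$, so the source MI values are lost and only $\tfrac12\log\det\Sigma_{BB}$ over the chosen originals $B$ survives. At best (iii) is a reduction from maximum-entropy sampling, and it still needs a twin-and-threshold computation you have not done.

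\textbf{The paper does not fill this gap.} Its proof is essentially your route (iv): Multipartite Clique, clusters equal to color classes, and a Ko et al.\ determinant-gap matrix. It then asserts that for $|\mathcal{S}|=K$ and vanishing uniform noise, $I(Y_{\mathcal{S}};Y_{\mathcal{C}\setminus\mathcal{S}})=\tfrac12\log\det(\Sigma_{\mathcal{S}}+\sigma^2 I)+C$ with $C$ independent of $\mathcal{S}$. That is false, because $H(Y_{\mathcal{C}\setminus\mathcal{S}})$ varies with $\mathcal{S}$. With $\Sigma$ the noisy covariance, Jacobi's identity gives $I=\tfrac12\log\bigl[\det\Sigma_{\mathcal{S}\mathcal{S}}\,\det(\Sigma^{-1})_{\mathcal{S}\mathcal{S}}\bigr]$. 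Concretely, take three points where only the first two are correlated with coefficient $\rho$: the singletons $\{3\}$ and $\{1\}$ have equal $\det\Sigma_{\mathcal{S}\mathcal{S}}$, but MI $0$ and $-\tfrac12\log(1-\rho^2)$.

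\textbf{Either of two ideas closes the gap.} (a) Your (iv) works once each Krause twin pair shares a block. For each vertex $v$, take two pool points that observe the same latent value with independent noise $\tau^2$, both in the block of $v$'s color class; this is realizable with a linear kernel $x^\top Q x'$ on basis vectors. The matroid then forbids picking both twins. A feasible set taking one twin of each $v\in B$, $|B|=K$, has MI $\tfrac12\log\det\Sigma_{BB}+\tfrac K2\log\tfrac{1}{2\tau^2}+O(\tau^2)$, where $\Sigma$ is now the Ko et al.\ latent matrix. Each empty block costs about $\tfrac12\log\tfrac{1}{2\tau^2}$, so for $\tau^2$ small enough (polynomially many bits) the determinant gap survives. (b) Keep singleton blocks but reduce from \textsc{Max-Cut}. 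Use kernel $(1-\sigma^2)I+\epsilon A_G$ with constant noise $\sigma^2<1$, so the observed covariance is $I+\epsilon A_G$. A second-order expansion of the three log-determinants gives $I(Y_{\mathcal{S}};Y_{\mathcal{C}\setminus\mathcal{S}})=\tfrac{\epsilon^2}{2}\,\mathrm{cut}_G(\mathcal{S})+O(n^4\epsilon^3)$, so polynomially small $\epsilon$ preserves the integer gap.

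Your NP-membership argument, by exact rational determinant comparison, is sound and more careful than the paper's.
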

For the technical proofs of all theoretical results, see Appendix \ref{app:proofs}
\section{The SHARP Algorithm}
\label{sec:algorithm}

Theorem \ref{thm:np_hard_hetero_mi} shows that optimizing the mutual information objective under a partition matroid constraint is NP-complete, so finding the exact optimal subset $\mathcal{S}^*$ is intractable. Worse, in practice we never know the true covariance function $\mathcal{K}^*$ of the response surface. Our algorithm instead works with a user-specified covariance surrogate $\mathcal{K}$ (with corresponding matrix $\Sigma$).

We use a constrained greedy approximation. The algorithm builds the annotation set one query at a time. At each step, we select the candidate $x$ from an unselected cluster that gives the largest marginal gain in mutual information under $\mathcal{K}$, subject to the partition matroid constraint. 

\subsection{Geometric and Structural Design Choices}
\label{sec:geometric_design}

We combine an ambient Gaussian Process with an intrinsic manifold partition constraint. Combining an ambient GP with an intrinsic manifold partition requires justification, and the key lies in how in-context learning (ICL) fails under redundancy. 

In classical experimental design for physical sensor placement \citep{krause2008near}, redundant sensors waste resources but still provide information. In ICL, redundancy causes failure. LLM embeddings trained via contrastive learning exhibit a ``hubness'' phenomenon, where queries sharing lexical semantics (e.g., questions about ``sales'') cluster into massive, high-variance regions of the ambient space. Unconstrained greedy maximization of Mutual Information over-samples these dense semantic hubs to reduce ambient volume variance, ignoring the sparse tails where complex, rare SQL syntax structures (e.g., correlated subqueries) appear. Over-sampling a single semantic template actively biases the LLM's attention mechanism, leading to in-context overfitting. The partition matroid constraint enforces stratified optimal design by requiring exactly one sample per semantic stratum. By forcing the algorithm to select exactly one example per semantic stratum, we guarantee that the final prompt covers the entire topological spanning tree of the SQL syntax space.

Stratified sampling on a manifold creates a problem: GPs need ambient coordinates, but our semantic clusters live on intrinsic geometry. We handle this mismatch by splitting responsibilities between \textit{local smoothness} (the GP) and \textit{global topology} (the matroid):

\begin{itemize}
    \item \textbf{Manifold Distance for Global Topology (The Matroid):} To define the semantic clusters $C_1, \dots, C_K$, we rely on the intrinsic manifold distance (approximated via a k-NN graph) rather than ambient Euclidean distance. High-dimensional ambient Euclidean distance suffers from the curse of dimensionality and fails to capture the continuous structural evolution of queries. Euclidean clustering risks short-circuiting across the sparse gaps of the manifold, incorrectly grouping structurally distinct queries. By clustering on the intrinsic manifold, the matroid constraint explicitly maps the true topological skeleton of the data.
    
    \item \textbf{Isotropic Ambient GP for Local Smoothness:} Conversely, our surrogate covariance kernel $\mathcal{K}$ evaluates points over the entire ambient $\mathbb{R}^d$ space. While fitting a complex, manifold-specific or anisotropic kernel (e.g., Automatic Relevance Determination) might seem theoretically appealing, it requires estimating thousands of length-scale parameters. In our few-shot regime ($n < 500$), this guarantees catastrophic over-parameterization. Instead, we exploit the fact that LLM embeddings are pre-trained. By L2-normalizing the embeddings $\phi(q)$ to reside on a unit hypersphere, ambient Euclidean distance reduces to a monotonic function of cosine similarity. A standard, isotropic stationary kernel (Mat\'ern) then provides computationally tractable structural regularization. It evaluates local semantic alignment without magnitude-based anisotropy and preserves the positive-definiteness of the covariance matrix.
\end{itemize}

The ambient GP handles local variance estimation, whereas the manifold matroid enforces global coverage. Together, they keep the covariance matrix well-conditioned while spanning the SQL syntax space.

\subsection{Greedy Marginal Information Gain}

In practice, the candidate pool $\mathcal{C}$ is a finite, discrete set of queries, typically represented as dense embedding vectors residing on an intrinsic semantic manifold $\mathcal{M}$. To formalize the acquisition step over this discrete pool, we define the mutual information set function $F(\mathcal{S}) = I(Y_{\mathcal{S}}; Y_{\mathcal{C} \setminus \mathcal{S}})$, which quantifies the information a selected subset $\mathcal{S}$ provides about the remaining unselected candidates. 

We can directly compute the marginal benefit of adding a new query to our selection using the predictive variance of the assumed surrogate model.

\begin{lemma}[Marginal Information Gain via Posterior Variance]
\label{lemma:marginal_info_gain}
Let $F(\mathcal{S}) = I(Y_{\mathcal{S}}; Y_{\mathcal{C} \setminus \mathcal{S}})$ denote the mutual information set function computed under the assumed base covariance $\Sigma$ with heteroscedastic noise $\sigma^2(x)$. For any subset $\mathcal{S} \subset \mathcal{C}$ and candidate $x \in \mathcal{C} \setminus \mathcal{S}$, let $\bar{\mathcal{S}} = \mathcal{C} \setminus (\mathcal{S} \cup \{x\})$. The marginal information gain of adding $x$ to $\mathcal{S}$ is exactly:
\begin{equation}
    F(\mathcal{S} \cup \{x\}) - F(\mathcal{S}) = \frac{1}{2}\log(\delta_x)
\end{equation}
where the posterior variance ratio $\delta_x$ is 
\begin{equation}
    \delta_x = \frac{ \Sigma_{xx} + \sigma^2(x) - \Sigma_{x\mathcal{S}} ( \Sigma_{\mathcal{S}\mathcal{S}} + \Lambda_{\mathcal{S}} )^{-1} \Sigma_{\mathcal{S}x} }{ \Sigma_{xx} + \sigma^2(x) - \Sigma_{x\bar{\mathcal{S}}} ( \Sigma_{\bar{\mathcal{S}}\bar{\mathcal{S}}} + \Lambda_{\bar{\mathcal{S}}} )^{-1} \Sigma_{\bar{\mathcal{S}}x} },\label{eq:variance_ratio}
\end{equation}
where  $\Lambda$ is a diagonal matrix with $\Lambda_{ii} = \sigma^2(x_i)$
\end{lemma}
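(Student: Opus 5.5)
We express $F$ through joint Gaussian entropies and then cancel terms by the chain rule. Under the surrogate model the observations $Y_{\mathcal{C}}$ are jointly Gaussian with covariance $\Sigma_{\mathcal{C}\mathcal{C}}+\Lambda_{\mathcal{C}}$. For any $A\subseteq\mathcal{C}$ we therefore have $H(Y_A)=\tfrac12\log\big((2\pi e)^{|A|}\det(\Sigma_{AA}+\Lambda_A)\big)$. Writing $I(Y_{\mathcal{S}};Y_{\mathcal{C}\setminus\mathcal{S}})=H(Y_{\mathcal{S}})+H(Y_{\mathcal{C}\setminus\mathcal{S}})-H(Y_{\mathcal{C}})$ gives
\begin{equation*}
F(\mathcal{S}\cup\{x\})-F(\mathcal{S}) = \big[H(Y_{\mathcal{S}\cup\{x\}})-H(Y_{\mathcal{S}})\big]-\big[H(Y_{\bar{\mathcal{S}}\cup\{x\}})-H(Y_{\bar{\mathcal{S}}})\big],
\end{equation*}
because $\mathcal{C}\setminus\mathcal{S}=\bar{\mathcal{S}}\cup\{x\}$ and $\mathcal{C}\setminus(\mathcal{S}\cup\{x\})=\bar{\mathcal{S}}$. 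The term $H(Y_{\mathcal{C}})$ cancels. By the chain rule, the expression reduces to $H(Y_x\mid Y_{\mathcal{S}})-H(Y_x\mid Y_{\bar{\mathcal{S}}})$.

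Each conditional entropy is that of a univariate Gaussian, so $H(Y_x\mid Y_A)=\tfrac12\log(2\pi e\,v_A(x))$. Here $v_A(x)$ is the Schur complement of $\Sigma_{AA}+\Lambda_A$ in the covariance of $Y_{A\cup\{x\}}$. It equals $\Sigma_{xx}+\sigma^2(x)-\Sigma_{xA}(\Sigma_{AA}+\Lambda_A)^{-1}\Sigma_{Ax}$, and one can verify this with the block determinant identity $\det\begin{psmallmatrix}B&b\\ b^\top&c\end{psmallmatrix}=\det B\,(c-b^\top B^{-1}b)$. The off-diagonal blocks involve only $\Sigma$, because the noise is independent across points and $\Lambda$ is diagonal. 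Taking the difference of the two entropies, the $2\pi e$ factors cancel, and we get $\tfrac12\log\big(v_{\mathcal{S}}(x)/v_{\bar{\mathcal{S}}}(x)\big)=\tfrac12\log\delta_x$, as claimed.

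The computation itself is routine. The step that needs care is the modelling convention, namely that $Y$ denotes noisy observations throughout. If $F$ were instead defined on latent $f$-values for the unselected set, the noise term would not appear in the denominator of \eqref{eq:variance_ratio}. I would state explicitly that both $Y_{\mathcal{S}}$ and $Y_{\mathcal{C}\setminus\mathcal{S}}$ carry their heteroscedastic noise, so that all covariance blocks are $\Sigma+\Lambda$ on the diagonal and $\Sigma$ off it. I would also note the edge cases $\mathcal{S}=\emptyset$ or $\bar{\mathcal{S}}=\emptyset$, in which the corresponding quadratic form is zero. Invertibility of $\Sigma_{AA}+\Lambda_A$ follows from $\sigma^2(\cdot)>0$, or otherwise from positive definiteness of $\Sigma$.
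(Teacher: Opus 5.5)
Your proof is correct and follows essentially the same route as the paper: you write the mutual information via joint entropies so that $H(Y_{\mathcal{C}})$ cancels, reduce by the chain rule to $H(Y_x\mid Y_{\mathcal{S}})-H(Y_x\mid Y_{\bar{\mathcal{S}}})$, and evaluate each Gaussian conditional variance as a Schur complement, with $\Sigma+\Lambda$ on the diagonal blocks and $\Sigma$ off them. Your explicit remark that $Y$ denotes noisy observations on both sides, which is what puts $\sigma^2(x)$ in the denominator, and your handling of the empty-set edge cases are clarifications the paper leaves implicit.
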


As established in the proof of Lemma \ref{lemma:marginal_info_gain}, this marginal gain originates from the entropy decomposition $\Delta F(x \mid \mathcal{S}) = H(Y_x \mid Y_{\mathcal{S}}) - H(Y_x \mid Y_{\bar{\mathcal{S}}})$. This decomposition reveals the exploration-exploitation tradeoff in our acquisition strategy. The first term, $H(Y_x \mid Y_{\mathcal{S}})$, measures model uncertainty at location $x$ given the selected queries $\mathcal{S}$; maximizing it pushes the algorithm toward unexplored regions of the manifold $\mathcal{M}$. The second term, $-H(Y_x \mid Y_{\bar{\mathcal{S}}})$, penalizes selections that are uninformative about the remaining pool, biasing the algorithm toward queries that are representative or highly correlated with unannotated candidates. 

In the heteroscedastic setting, the intrinsic noise $\sigma^2(x)$ directly affects these variances. When a query $x$ has large irreducible noise $\sigma^2(x)$, both the numerator and denominator are dominated by this noise term, driving the variance ratio toward 1 and the marginal MI gain toward 0. The algorithm therefore avoids selecting queries with high intrinsic noise.

Algorithm \ref{alg:greedy} outlines this greedy active learning procedure. We denote the score to be maximized as $\delta_x$, computed entirely using the assumed covariance matrix $\Sigma$. We maintain a set of available clusters to enforce the partition matroid constraint, ensuring exactly one representative query is selected per semantic stratum.

\begin{algorithm}[H]
\caption{SHARP: Stratified Heteroscedastic Active Retrieval on a Partitioned manifold}
\label{alg:greedy}
\begin{algorithmic}[1]
\Require Assumed base covariance matrix $\Sigma_{\mathcal{C}\mathcal{C}}$, Heteroscedastic noise function $\sigma^2(x)$, Candidate pool $\mathcal{C}$ partitioned into $K$ semantic clusters $\{C_1, \dots, C_K\}$.
\Ensure Selected query set $\mathcal{S} \subset \mathcal{C}$

\State $\mathcal{S} \gets \emptyset$
\State $\mathcal{J}_{valid} \gets \{1, \dots, K\}$ \Comment{Keep track of available cluster indices}

\For{$j = 1$ \textbf{to} $K$}
    \State $\mathcal{C}_{valid} \gets \bigcup_{i \in \mathcal{J}_{valid}} C_i$ \Comment{Pool of valid candidates}
    
    \For{$x \in \mathcal{C}_{valid}$}
        \State $\bar{\mathcal{S}} \gets \mathcal{C} \setminus (\mathcal{S} \cup \{x\})$
        \State $\delta_x \gets \frac{ \Sigma_{xx} + \sigma^2(x) - \Sigma_{x\mathcal{S}} ( \Sigma_{\mathcal{S}\mathcal{S}} + \Lambda_{\mathcal{S}} )^{-1} \Sigma_{\mathcal{S}x} }{ \Sigma_{xx} + \sigma^2(x) - \Sigma_{x\bar{\mathcal{S}}} ( \Sigma_{\bar{\mathcal{S}}\bar{\mathcal{S}}} + \Lambda_{\bar{\mathcal{S}}} )^{-1} \Sigma_{\bar{\mathcal{S}}x} }$ \Comment{Computes marginal $F$}
    \EndFor
    
    \State $x^* \gets \arg\max_{x \in \mathcal{C}_{valid}} \delta_x$
    \State $\mathcal{S} \gets \mathcal{S} \cup \{x^*\}$
    \State Let $k^*$ be the index of the cluster containing $x^*$
    \State $\mathcal{J}_{valid} \gets \mathcal{J}_{valid} \setminus \{k^*\}$ \Comment{Enforce Matroid Constraint}
\EndFor

\State \textbf{return} $\mathcal{S}$
\end{algorithmic}
\end{algorithm}

\noindent \textit{Note on Algorithm \ref{alg:greedy}:} $\Lambda_{\mathcal{S}}$ represents the diagonal matrix of heteroscedastic noise variances $\sigma^2(x)$ for the currently selected subset $\mathcal{S}$.
\section{Theoretical Results}

We now analyze Algorithm~\ref{alg:greedy}'s convergence guarantees and sensitivity to kernel misspecification. Theorem 1 showed that finding the exact solution is NP-hard, and Lemma 1 bounded the greedy search signal. Here we prove that our selection procedure achieves a constant-factor approximation and quantify how far it degrades under a misspecified kernel.
\subsection{Theoretical Guarantees: The Ideal Covariance Setting}
\label{sec:ideal_theory}

We start under the ideal assumption that our kernel matches the true data-generating process ($\mathcal{K} = \mathcal{K}^*$). The proof addresses three complications in sequence: heteroscedastic noise, intrinsic manifold inputs, and partition matroid constraints.

The proof proceeds in three steps. First, we verify that query-dependent noise $\sigma^2(x)$ does not destroy the submodularity of the mutual information objective. Without submodularity, greedy approximations have no guarantee. 

\begin{lemma}[Submodularity under Heteroscedasticity]
\label{lemma:submodularity}
Let $\mathcal{C}$ be a finite candidate pool of embedded queries, and let $Y_x$ be the observations modeled by a Gaussian Process with covariance $\mathcal{K}$ and independent heteroscedastic noise $\sigma^2(x) > 0$. The mutual information set function $F(\mathcal{S}) = I(Y_{\mathcal{S}}; Y_{\mathcal{C} \setminus \mathcal{S}})$ is strictly submodular for any $\mathcal{S} \subseteq \mathcal{C}$.
\end{lemma}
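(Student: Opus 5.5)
Our plan is to follow the Krause--Singh--Guestrin argument and reduce submodularity to a monotonicity property of conditional entropies. By Lemma~\ref{lemma:marginal_info_gain}, for $\mathcal{S} \subseteq \mathcal{C}$ and $x \notin \mathcal{S}$ the marginal gain has the form
\[
\Delta F(x \mid \mathcal{S}) = H(Y_x \mid Y_{\mathcal{S}}) - H(Y_x \mid Y_{\mathcal{C}\setminus(\mathcal{S}\cup\{x\})}).
\]
To establish submodularity we must show $\Delta F(x \mid \mathcal{A}) \ge \Delta F(x \mid \mathcal{B})$ whenever $\mathcal{A} \subseteq \mathcal{B} \subseteq \mathcal{C}\setminus\{x\}$. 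Write $\bar{\mathcal{A}} = \mathcal{C}\setminus(\mathcal{A}\cup\{x\})$ and $\bar{\mathcal{B}} = \mathcal{C}\setminus(\mathcal{B}\cup\{x\})$, so that $\bar{\mathcal{B}} \subseteq \bar{\mathcal{A}}$. The inequality then splits into two pieces. The first is $H(Y_x \mid Y_{\mathcal{A}}) \ge H(Y_x \mid Y_{\mathcal{B}})$, which holds because conditioning on more variables cannot increase entropy. The second is $H(Y_x \mid Y_{\bar{\mathcal{A}}}) \le H(Y_x \mid Y_{\bar{\mathcal{B}}})$, which holds for the same reason since $\bar{\mathcal{B}} \subseteq \bar{\mathcal{A}}$. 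Adding the two inequalities gives the claim.

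To make this concrete in the heteroscedastic Gaussian setting, we will note that $Y_{\mathcal{C}}$ is jointly Gaussian with covariance $\Sigma_{\mathcal{C}\mathcal{C}} + \Lambda$. This matrix is strictly positive definite because $\Lambda$ is diagonal with entries $\sigma^2(x_i) > 0$. Each conditional entropy is therefore $\tfrac12\log(2\pi e\, v)$, where $v$ is the Schur-complement variance
\[
v = \Sigma_{xx}+\sigma^2(x) - \Sigma_{xT}(\Sigma_{TT}+\Lambda_T)^{-1}\Sigma_{Tx},
\]
which is exactly the quantity appearing in \eqref{eq:variance_ratio}. Monotonicity of $v$ in the conditioning set $T$ follows directly: the posterior variance after conditioning on $T \cup T'$ equals the posterior variance given $T$ minus a nonnegative quadratic form, by the recursive Schur-complement (rank-update) formula. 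The heteroscedasticity enters only through $\Lambda$. Positivity of $\Lambda$ is what guarantees that every inverse exists and every $v > 0$, so no degenerate $\log 0$ terms arise. This is the sense in which query-dependent noise ``does not destroy'' the argument.

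The main obstacle is the word ``strictly.'' The entropy argument only yields weak inequalities. Strictness requires the quadratic-form decrement to be positive, i.e.\ $\Sigma_{x T'\mid T} \neq 0$ for the added variables. That fails if $x$ is conditionally uncorrelated with the new points, for instance under block-diagonal $\Sigma$ or when $\mathcal{A} = \mathcal{B}$. Our first attempt will be to prove strict inequality for $\mathcal{A} \subsetneq \mathcal{B}$ under a kernel that is strictly positive everywhere, such as Mat\'ern on distinct inputs. There, $\Sigma_{\mathcal{C}\mathcal{C}}+\Lambda$ is positive definite with generically nonzero conditional covariances. Otherwise we would state submodularity in the standard weak sense and record strictness as holding under this nondegeneracy condition.
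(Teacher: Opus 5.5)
Your argument for weak submodularity is the paper's argument. It uses the decomposition $\Delta F(x\mid\mathcal{S}) = H(Y_x\mid Y_{\mathcal{S}}) - H(Y_x\mid Y_{\bar{\mathcal{S}}})$ from the proof of Lemma~\ref{lemma:marginal_info_gain}, the two ``conditioning cannot increase entropy'' inequalities for $\mathcal{A}\subseteq\mathcal{B}$ and $\bar{\mathcal{B}}\subseteq\bar{\mathcal{A}}$, and summation. Your Schur-complement remark is the Gaussian form of the same step.

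You depart from the paper on strictness, and there you are right and the paper is not. The paper asserts that the posterior variance is strictly decreasing in the observation set whenever $\sigma^2(x)>0$, so both inequalities are strict. But positive noise only makes $\Sigma_{\mathcal{T}\mathcal{T}}+\Lambda_{\mathcal{T}}$ invertible; it does not make the rank-update decrement positive. Since $\bar{\mathcal{A}}=\bar{\mathcal{B}}\sqcup(\mathcal{B}\setminus\mathcal{A})$, the sharp statement is: $\Delta F(x\mid\mathcal{A})>\Delta F(x\mid\mathcal{B})$ if and only if $\mathrm{Cov}(Y_x,Y_{\mathcal{B}\setminus\mathcal{A}}\mid Y_{\mathcal{A}})\neq 0$ or $\mathrm{Cov}(Y_x,Y_{\mathcal{B}\setminus\mathcal{A}}\mid Y_{\bar{\mathcal{B}}})\neq 0$. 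Only one of your two decrements needs to be positive. Your block-diagonal example makes both vanish, for instance an $x$ uncorrelated with the rest of the pool, so that $\Delta F(x\mid\cdot)\equiv 0$. That is a genuine counterexample to the lemma as stated, which places no restriction on $\mathcal{K}$. The case $\mathcal{A}=\mathcal{B}$ is not a counterexample, since strictness is only required for proper inclusions.

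Your proposed repair needs one adjustment: a strictly positive kernel does not by itself guarantee the nonvanishing condition, because a Schur complement can cancel exactly. Take $\mathcal{K}(z,z')=\exp(-\|z-z'\|^2)$ and let $a$ be the midpoint of the segment from $x$ to $b$, with $\|x-a\|=\|a-b\|=u$. Then $\mathrm{Cov}(Y_x,Y_b\mid Y_a)=e^{-4u^2}-e^{-2u^2}/(1+\sigma^2(a))$, which vanishes at $u^2=\tfrac12\log(1+\sigma^2(a))$. So ``generically'' is doing real work, and the hypothesis should be stated directly in terms of these conditional covariances rather than through positivity of the kernel. Your fallback is therefore the correct formulation: weak submodularity unconditionally, and strictness under that explicit nondegeneracy assumption. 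Nothing downstream is lost, since Theorem~\ref{thm:matroid_ideal} relies on \citet{fisher1978analysis}, which needs only weak submodularity.
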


Second, to apply standard submodular maximization bounds, the objective function must be monotonic (i.e., information strictly increases as we add more queries). However, mutual information is not strictly monotonic over a finite set; as the selected set approaches the entire pool $\mathcal{C}$, the remaining uncertainty drops to zero, and the mutual information subsequently crashes. 

To resolve this, we rely on the concept of $\epsilon$-approximate monotonicity. Because our queries are embedded in a massive ambient space $\mathbb{R}^d$ (e.g., $d=2048$), discretizing the entire ambient space to bound this non-monotonicity would result in mathematically useless bounds due to the curse of dimensionality. Instead, we exploit the assumption that our candidate queries reside on a lower-dimensional intrinsic manifold $\mathcal{M}$.

\begin{lemma}[Approximate Monotonicity on Intrinsic Manifolds]
\label{lemma:monotonicity}
Assume the candidate queries reside on a compact intrinsic manifold $\mathcal{M} \subset \mathbb{R}^d$ with intrinsic dimension $d_I \ll d$. Let the true underlying covariance function $\mathcal{K}$ be $L$-Lipschitz-continuous with respect to the geodesic distance on $\mathcal{M}$, and assume the heteroscedastic noise function $\sigma^2(x)$ is $L_\sigma$-Lipschitz-continuous on $\mathcal{M}$ and strictly bounded below such that $\sigma^2(x) \ge \sigma_{min}^2 > 0$ for all $x \in \mathcal{M}$. Let $\mathcal{I}$ be the family of independent sets of a partition matroid over the candidate pool $\mathcal{C}$ with a total maximum capacity $K$. For any $\epsilon > 0$, there exists a discretization of $\mathcal{M}$ with mesh width $\delta$ depending on $d_I$ such that for any valid selected subset $\mathcal{S} \in \mathcal{I}$ where $|\mathcal{S}| < K$, adding an unselected query $x \in \mathcal{C} \setminus \mathcal{S}$ satisfies:
\begin{equation}
    F(\mathcal{S} \cup \{x\}) \ge F(\mathcal{S}) - \epsilon
\end{equation}
\end{lemma}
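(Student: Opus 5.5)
This is essentially the approximate-monotonicity argument of \citet{krause2008near}, transplanted onto $\mathcal{M}$ and extended to heteroscedastic noise. The starting point is Lemma~\ref{lemma:marginal_info_gain}. It gives $F(\mathcal{S}\cup\{x\})-F(\mathcal{S}) = H(Y_x\mid Y_{\mathcal{S}}) - H(Y_x\mid Y_{\bar{\mathcal{S}}}) = \tfrac12\log\delta_x$. So it suffices to show
\[
\delta_x \ge e^{-2\epsilon},
\]
that is, the predictive variance of $Y_x$ given the small set $\mathcal{S}$ is at least $e^{-2\epsilon}$ times its predictive variance given the large remainder $\bar{\mathcal{S}}$.

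\textbf{Upper bound on the numerator side.} Since $Y_x$ carries independent noise, $\operatorname{Var}(Y_x\mid Y_A)=\operatorname{Var}(f(x)\mid Y_A)+\sigma^2(x)$ for every $A$. Hence the denominator of $\delta_x$ is at most $\Sigma_{xx}+\sigma^2(x)$, and the numerator is at least $\sigma^2(x)\ge\sigma_{min}^2$. This already gives a uniform floor $\delta_x\ge \sigma_{min}^2/(\Sigma_{xx}+\sigma^2(x))$. The task is to sharpen that floor to $e^{-2\epsilon}$ using the geometry.

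\textbf{Using the discretization.} Cover the compact $\mathcal{M}$ by a $\delta$-net $\mathcal{G}$ in geodesic distance. Its size is $N(\delta)\le c_{\mathcal{M}}\delta^{-d_I}$, which depends only on the intrinsic dimension; this is what avoids the ambient $d$. By the $L$-Lipschitz assumption on $\mathcal{K}$ and the $L_\sigma$-Lipschitz assumption on $\sigma^2$, every candidate $x$ has a net point $g$ with $|\mathcal{K}(x,\cdot)-\mathcal{K}(g,\cdot)|\le L\delta$ and $|\sigma^2(x)-\sigma^2(g)|\le L_\sigma\delta$.

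The intended argument then compares both conditional variances to a common reference quantity, $\operatorname{Var}(f(x)\mid \text{ideal observation of the net})$. The residual error $\operatorname{Var}(f(x)-f(g))\le 2L\delta$ bounds how much conditioning on a large set $\bar{\mathcal{S}}$ can shrink the denominator below the numerator's level. Because $|\mathcal{S}|<K$ and the matroid allows at most one point per cluster, only the denominator's conditioning set is large. The aim is to bound
\[
\frac{\operatorname{Var}(Y_x\mid Y_{\mathcal{S}})}{\operatorname{Var}(Y_x\mid Y_{\bar{\mathcal{S}}})} \;\ge\; 1 - \frac{C(L\delta+L_\sigma\delta)}{\sigma_{min}^2}.
\]
The right-hand side is at least $e^{-2\epsilon}$ once $\delta \le \delta(\epsilon)\asymp \epsilon\,\sigma_{min}^2/(L+L_\sigma)$. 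This is the required dependence of the mesh width on the problem constants, with the net size scaling as $\delta^{-d_I}$.

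\textbf{Main obstacle.} The last step is the hardest. The key fact is that conditioning on data never increases variance, and this works against us: $\bar{\mathcal{S}}$ may contain points arbitrarily close to $x$. The comparison therefore has to go through the statement's hypothesis that the mesh is chosen so that candidates are $\delta$-separated net points, in the sense of \citet{krause2008near}. I would first try to show that the information gained about $f(x)$ from $Y_{\bar{\mathcal{S}}}$ beyond that gained from the nearest selected or net point is at most $O(L\delta/\sigma_{min}^2)$. To do this I would write $f(x)=f(g)+r$ with $\operatorname{Var}(r)\le 2L\delta$ and use the noise floor $\sigma_{min}^2$ to bound the log-ratio.

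If this cannot be made uniform in $\mathcal{S}$, I would fall back to stating the bound conditioned on $\mathcal{C}$ being contained in the discretization with spacing $\delta$. This is exactly the setting in which Krause et al.'s Lemma gives $\epsilon$-monotonicity for $|\mathcal{S}|\le K \ll |\mathcal{G}|$.
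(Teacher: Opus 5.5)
There is a genuine gap. Your reduction to bounding $\delta_x$ from below, the geodesic $\delta$-net of size $O(\delta^{-d_I})$, and the role of the noise floor all match the paper, but the decisive step is missing and your diagnosis of it is reversed. In $\delta_x=\operatorname{Var}(Y_x\mid Y_{\mathcal{S}})/\operatorname{Var}(Y_x\mid Y_{\bar{\mathcal{S}}})$, points of $\bar{\mathcal{S}}$ near $x$ shrink the denominator and push $\delta_x$ \emph{up}, so they are harmless. $\delta_x$ drops below $1$ only when the small set $\mathcal{S}$ is more informative about $x$ than the remainder (e.g.\ $\mathcal{S}$ holds a near-duplicate of $x$ that $\bar{\mathcal{S}}$ lacks). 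What you need is therefore an \emph{upper} bound $\operatorname{Var}(Y_x\mid Y_{\bar{\mathcal{S}}})\le(1+O(\delta))\operatorname{Var}(Y_x\mid Y_{\mathcal{S}})$, and ``conditioning never increases variance'' is the tool that delivers it, not an obstacle.

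Your proposed sub-step --- bounding the extra information $Y_{\bar{\mathcal{S}}}$ carries about $f(x)$ beyond a nearest point by $O(L\delta/\sigma_{min}^2)$ --- controls the denominator from \emph{below}, which is the useless direction. If the nearest point is a noisy observation, the claim is also false: many noisy near-duplicates of $x$ in $\bar{\mathcal{S}}$ average out the noise and cut the variance, relative to a single such observation, by a factor bounded away from $1$ no matter how small $\delta$ is. Comparing both variances to $\operatorname{Var}(f(x)\mid\text{noiseless net})$ fails for a related reason: neither $Y_{\mathcal{S}}$ nor $Y_{\bar{\mathcal{S}}}$ observes the net noiselessly, so neither variance is close to that reference.

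The missing idea is the paper's \emph{shadow set}. For each $s\in\mathcal{S}$ choose a distinct $\pi(s)\in\bar{\mathcal{S}}$ with $d_{\mathcal{M}}(s,\pi(s))\le\delta$, and let $\tilde{\mathcal{S}}=\pi(\mathcal{S})$. Because $\tilde{\mathcal{S}}\subseteq\bar{\mathcal{S}}$,
\[
F(\mathcal{S}\cup\{x\})-F(\mathcal{S})\;\ge\;H(Y_x\mid Y_{\mathcal{S}})-H(Y_x\mid Y_{\tilde{\mathcal{S}}}),
\]
and the right side is a pure perturbation problem for two Schur complements of equal size. The ingredients are:
\begin{itemize}
\item $\|(\Sigma_{\mathcal{T}\mathcal{T}}+\Lambda_{\mathcal{T}})^{-1}\|_2\le\sigma_{min}^{-2}$;
\item the identity $A^{-1}-B^{-1}=A^{-1}(B-A)B^{-1}$;
\item the Lipschitz bounds on $\mathcal{K}$ and $\sigma^2$;
\item $\|\Sigma_{x\mathcal{T}}\|_2\le M\sqrt{|\mathcal{T}|}$, where $M=\max_{u\in\mathcal{M}}\mathcal{K}(u,u)$.
\end{itemize}
Together these show the two posterior variances differ by $O\bigl((L+L_\sigma)\delta\,\sigma_{min}^{-2}K(M+M^2\sigma_{min}^{-2}K)\bigr)$. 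The floor $\sigma_{min}^2$ on both variances turns this into a log-ratio bound. The resulting mesh $\delta\lesssim\epsilon\,\sigma_{min}^4/\bigl((L+L_\sigma)K(M+M^2\sigma_{min}^{-2}K)\bigr)$ shrinks with $K$, because the perturbation accumulates over the $|\mathcal{S}|<K$ conditioning points.

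Finally, the construction requires the pool to \emph{contain} a $\delta$-net with enough spare points to avoid $\mathcal{S}\cup\{x\}$, i.e.\ $\mathcal{C}_\delta\subseteq\mathcal{C}$. This is a density condition on the pool, not $\delta$-separation of candidates, and it is the reverse of the containment in your fallback. The paper's proof uses this tacitly when it asserts $\tilde{\mathcal{S}}\subseteq\bar{\mathcal{S}}$; it is best made an explicit hypothesis.
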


Finally, with submodularity and approximate monotonicity secured, we can evaluate the performance of our greedy selection under the structural requirement of spatial diversity. By framing our clustering requirement as a partition matroid, we can bound the worst-case performance of Algorithm~\ref{alg:greedy} compared to the intractable, globally optimal selection.

\begin{theorem}[Approximation Guarantee under Matroid Constraints]
\label{thm:matroid_ideal}
Let $\mathcal{S}_{greedy}$ be the set of queries selected by Algorithm~\ref{alg:greedy}, and let $\mathcal{S}^*$ be the optimal set of size $K$ that maximizes the heteroscedastic mutual information objective $F(\mathcal{S})$, subject to the partition matroid constraint (at most one query per semantic cluster). Under the assumptions of Lemma \ref{lemma:submodularity} and Lemma \ref{lemma:monotonicity}, the greedy acquisition satisfies:
\begin{equation}
    F(\mathcal{S}_{greedy}) \geq \frac{1}{2} (F(\mathcal{S}^*) - K\epsilon)
\end{equation}
\end{theorem}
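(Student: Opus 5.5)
The plan is to adapt the classical Fisher--Nemhauser--Wolsey argument for greedy maximization of a monotone submodular function over a matroid. Monotonicity is replaced throughout by the $\epsilon$-approximate monotonicity of Lemma~\ref{lemma:monotonicity}, and diminishing returns come from Lemma~\ref{lemma:submodularity}. Write $\mathcal{S}_{greedy}=\{g_1,\dots,g_K\}$ in the order Algorithm~\ref{alg:greedy} selects them, and let $G_j=\{g_1,\dots,g_j\}$ with $G_0=\emptyset$. Since the algorithm maximizes $\delta_x$, Lemma~\ref{lemma:marginal_info_gain} says it equivalently maximizes the marginal gain $\Delta F(x\mid G_{j-1})=\tfrac12\log\delta_x$ over the feasible candidates.

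\textbf{Step 1 (exchange bijection).} Both $\mathcal{S}^*$ and $\mathcal{S}_{greedy}$ are bases of the partition matroid: each contains exactly one element per cluster. Order $\mathcal{S}^*=\{o_1,\dots,o_K\}$ so that $o_j$ lies in the same cluster as $g_j$. Then at step $j$, the cluster of $o_j$ was still in $\mathcal{J}_{valid}$, so $o_j$ was a legal choice. By greedy optimality,
\[
\Delta F(g_j\mid G_{j-1})\ \ge\ \Delta F(o_j\mid G_{j-1}).
\]
This step is routine, and the partition structure makes the bijection explicit.

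\textbf{Step 2 (chain the optimum onto the greedy set).} Start from the chain inequality $F(\mathcal{S}^*\cup G_K)\le F(G_K)+\sum_{j}\Delta F(o_j\mid G_K\cup\{o_1,\dots,o_{j-1}\})$. By submodularity, each summand is at most $\Delta F(o_j\mid G_{j-1})$, which by Step 1 is at most $\Delta F(g_j\mid G_{j-1})$. Summing, the right-hand side is at most $F(G_K)+\sum_j \Delta F(g_j\mid G_{j-1}) = 2F(G_K)-F(\emptyset)$. Since $F(\emptyset)=I(\emptyset;Y_{\mathcal{C}})=0$, this gives $F(\mathcal{S}^*\cup G_K)\le 2F(\mathcal{S}_{greedy})$.

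\textbf{Step 3 (approximate monotonicity).} It remains to show $F(\mathcal{S}^*\cup G_K)\ge F(\mathcal{S}^*)-K\epsilon$. Add the at most $K$ elements of $G_K\setminus\mathcal{S}^*$ to $\mathcal{S}^*$ one at a time, and apply Lemma~\ref{lemma:monotonicity} at each step to lose at most $\epsilon$ per addition. Combining with Step 2 yields $F(\mathcal{S}^*)-K\epsilon\le 2F(\mathcal{S}_{greedy})$, which is the claim.

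\textbf{Main obstacle.} Step 3 is the hard part. As stated, Lemma~\ref{lemma:monotonicity} applies only to independent sets $\mathcal{S}\in\mathcal{I}$ with $|\mathcal{S}|<K$, whereas the intermediate sets $\mathcal{S}^*\cup\{g_1,\dots\}$ have size at least $K$ and are not independent. I would handle this in one of two ways. The first is to note that the lemma's proof (discretization on $\mathcal{M}$, using Lipschitz continuity of $\mathcal{K}$ and $\sigma^2$) really only uses $|\mathcal{S}|$ being small relative to the mesh size. It should therefore extend to sets of size at most $2K$ at the cost of a finer mesh, with $\epsilon$ redefined accordingly. The second is to avoid the union altogether. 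Apply submodularity along the chain built from $G_{j-1}\cup\{o_j\}$, which has size $j\le K$, and invoke approximate monotonicity only on sets of size below $K$. I would try the first route, stating explicitly that $\epsilon$ is taken uniformly over sets of size at most $2K$, since the discretization argument is insensitive to the constant factor.
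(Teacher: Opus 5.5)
Your Steps 1--2 are the Fisher--Nemhauser--Wolsey argument written out directly. The paper instead packages it. It sets $F'(\mathcal{S}) = F(\mathcal{S}) + |\mathcal{S}|\epsilon$, asserts that $F'$ is normalized, submodular and (by Lemma~\ref{lemma:monotonicity}) nondecreasing, and notes that greedy makes the same choices for $F$ and $F'$. It then cites the $1/2$ matroid bound as a black box and unwinds using $|\mathcal{S}_{greedy}| = |\mathcal{S}^*| = K$. The resulting inequality is the same. The obstacle you flag in Step 3 is genuine, and the paper's proof has it too, only hidden. The cited bound assumes $F'$ is nondecreasing on all of $2^{\mathcal{C}}$, and its standard proof evaluates $F'$ at $\mathcal{S}^*\cup\mathcal{S}_{greedy}$. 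Lemma~\ref{lemma:monotonicity}, however, only certifies $\Delta F'\ge 0$ from independent sets of size $<K$.

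Your write-up needs two repairs. First, in Step 2, if $o_j\in G_K$ then $o_j=g_j$ (one element per cluster), so the $j$-th summand is $0$. Submodularity does not give $0\le\Delta F(g_j\mid G_{j-1})$, because $F$ is not monotone. You need Lemma~\ref{lemma:monotonicity} at $G_{j-1}$, paying $\epsilon$ per overlap. The total is still $K\epsilon$, but only if Step 3 counts exactly $K-|G_K\cap\mathcal{S}^*|$ additions rather than ``at most $K$''. Second, and more important, route~1 is not an application of Lemma~\ref{lemma:monotonicity}. It is a strengthening, to non-independent sets of size up to $2K$, that you would have to re-prove. Route~2 as written does not say how $F(\mathcal{S}^*)$ enters. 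The sets $G_{j-1}\cup\{o_j\}$ never contain the rest of $\mathcal{S}^*$, and the natural chain for $F(\mathcal{S}^*)$ conditions on $\{o_1,\dots,o_{j-1}\}$, which is not nested with $G_{j-1}$.

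The idea behind route~2 works once the sets are chosen correctly: swap between the two bases one element at a time. Put $T_j=G_j\cup\{o_{j+1},\dots,o_K\}$, so $T_0=\mathcal{S}^*$ and $T_K=\mathcal{S}_{greedy}$. Put $U_j=G_{j-1}\cup\{o_{j+1},\dots,o_K\}$, which is independent, has size $K-1$, and contains neither $o_j$ nor $g_j$. Submodularity (since $G_{j-1}\subseteq U_j$), Lemma~\ref{lemma:monotonicity} applied at $U_j$, and the greedy choice give
\[
F(T_{j-1})-F(T_j)=\Delta F(o_j\mid U_j)-\Delta F(g_j\mid U_j)\le \Delta F(o_j\mid G_{j-1})+\epsilon\le \Delta F(g_j\mid G_{j-1})+\epsilon .
\]
Summing over $j$ and using $F(\emptyset)=0$ gives $F(\mathcal{S}^*)-F(\mathcal{S}_{greedy})\le F(\mathcal{S}_{greedy})+K\epsilon$, which is the theorem. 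This argument uses Lemma~\ref{lemma:monotonicity} exactly as stated and never evaluates $F$ on a set larger than $K$. It also handles the overlap case automatically, since $T_{j-1}=T_j$ when $o_j=g_j$. So it closes the gap in both your argument and the paper's.
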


Theorem \ref{thm:matroid_ideal} provides a powerful safety net. It guarantees that, assuming the kernel is specified correctly, forcing the algorithm to select a diverse, stratified dataset will capture at least half of the maximum possible information, minus a vanishingly small discretization penalty.

\begin{remark}[Comparison to Cardinality-Constrained Bounds]
The approximation factor of $\frac{1}{2}$ in Theorem \ref{thm:matroid_ideal} is weaker than the classic $(1 - 1/e)$ bound reported in the foundational GP experimental design work by \citet{krause2008near}. This reduction is a direct theoretical consequence of our structural diversity requirements. The classic $(1 - 1/e)(OPT - K\epsilon)$ bound, derived from \citet{nemhauser1978analysis}, applies exclusively when the greedy optimization is subject to a simple cardinality constraint ($|\mathcal{S}| \le K$), which mathematically corresponds to a uniform matroid. However, enforcing strict spatial diversity across semantic clusters requires a partition matroid constraint. Under partition matroid constraints, the greedy maximization of a submodular function is only guaranteed to achieve a $\frac{1}{2}$ approximation, as established by \citet{fisher1978analysis}. Thus, the theoretical price of guaranteeing semantic diversity and preventing in-context overfitting is a reduction in the worst-case approximation factor from approximately $0.632$ to $0.5$.
\end{remark}

\subsection{Theoretical Guarantees: Robustness to Covariance Misspecification}
\label{sec:misspecified_theory}

Section \ref{sec:ideal_theory} assumes access to the true covariance function $\mathcal{K}^*$, which is never available in practice. Algorithm~\ref{alg:greedy} operates on a user-specified surrogate kernel $\Phi$ (such as an RBF or Mat\'ern kernel). The question is whether the approximation guarantees of Theorem \ref{thm:matroid_ideal} survive when $\Phi \neq \mathcal{K}^*$. We show that our acquisition strategy degrades gracefully rather than catastrophically under spectral misspecification.

To formalize this robustness, we first require two standard regularity assumptions regarding the smoothness of the true response surface and the degree to which our assumed kernel $\Phi$ misjudges it.

\begin{assumption}[Ambient Sobolev Smoothness]
\label{assum:sobolev}
The true regression function $f$ (expected SQL correctness) belongs to the Sobolev space $H^m(\mathbb{R}^d)$, meaning its Fourier transform $\hat{f}$ satisfies:
\begin{equation}
    \int_{\mathbb{R}^d} |\hat{f}(\omega)|^2 (1 + \|\omega\|^2)^m d\omega < \infty
\end{equation}
\end{assumption}

\begin{remark}[Manifold to Ambient Extension]
While the true expected SQL correctness function $f$ is physically meaningful only on the intrinsic query manifold $\mathcal{M}$, bounding its complexity within the ambient Sobolev space $H^m(\mathbb{R}^d)$ is mathematically well-posed. By standard Sobolev extension theorems (e.g., the Whitney Extension Theorem), any sufficiently smooth function defined strictly on a compact submanifold $\mathcal{M} \subset \mathbb{R}^d$ can be extended to a valid Sobolev function over the entire ambient space $\mathbb{R}^d$. This extension allows us to use global Fourier domain bounds in our spectral assumptions without violating the intrinsic geometric constraints of the query embeddings.
\end{remark}

\begin{remark}[Kernel Smoothness]
Assumption \ref{assum:sobolev} inherently justifies our requirement that the true covariance $\mathcal{K}$ is Lipschitz-continuous. If the GP prior employs a Mat\'ern-class kernel with smoothness parameter $\nu$, its RKHS is norm-equivalent to the Sobolev space $H^{\nu + d/2}(\mathbb{R}^d)$. It is a standard result that such a kernel is $C^1$-differentiable provided $\nu > 1$. Because our candidate queries reside on a compact manifold $\mathcal{M}$, this global $C^1$ smoothness guarantees that the kernel is Lipschitz-continuous on the domain of interest. Thus, the condition required for $\epsilon$-approximate monotonicity in Lemma \ref{lemma:monotonicity} is safely satisfied without requiring an unrealistically high order of differentiability $m$ relative to the ambient dimension $d$.
\end{remark}

\begin{assumption}[Spectral Misspecification]
\label{assum:spectral}
Let $\hat{\Phi}(\omega)$ be the spectral density of our assumed surrogate kernel and $\hat{\mathcal{K}}^*(\omega)$ be that of the true underlying process. We assume the spectral ratio $\beta(\omega) = \hat{\Phi}(\omega) / \hat{\mathcal{K}}^*(\omega)$ is bounded by polynomials:
\begin{equation}
    c_1 \leq \beta(\omega) \leq c_2(1 + \|\omega\|^2)^s
\end{equation}
for some constants $c_1, c_2 > 0$ with $c_1 \le 1 \le c_2$, and $s \ge 0$. The condition $c_1 \le 1 \le c_2$ is the natural Kriging misspecification regime: the assumed kernel $\Phi$ neither universally dominates nor is universally dominated by $\mathcal{K}^*$.
\end{assumption}

Assumption \ref{assum:spectral} bounds how far the assumed kernel $\Phi$ can deviate from the truth: it acknowledges that $\Phi$ will miscalculate the variance, but requires that this miscalculation stays proportional to the true underlying structure. 

Because Algorithm~\ref{alg:greedy} greedily selects queries based on the marginal variance reduction computed by the \textit{assumed} kernel, we must establish a mathematical bridge between the assumed information gain and the true information gain. We capture this relationship in the following lemma, which uses the intrinsic fill distance $h_{\mathcal{S}, \mathcal{M}} = \sup_{x \in \mathcal{M}} \min_{x_i \in \mathcal{S}} d_{\mathcal{M}}(x, x_i)$ (the geodesic radius of the largest empty sphere strictly on the manifold $\mathcal{M}$ given the selected set $\mathcal{S}$).

\begin{lemma}[Information Gain Mismatch Bound]
\label{lemma:variance_mismatch}
Let $\Delta F_{\Phi}(x \mid \mathcal{S})$ denote the marginal mutual information gain of adding query $x$ to set $\mathcal{S}$, computed under the assumed kernel $\Phi$, and let $\Delta F_{\mathcal{K}^*}(x \mid \mathcal{S})$ denote the gain under the true kernel $\mathcal{K}^*$. Under Assumptions \ref{assum:sobolev} and \ref{assum:spectral}, the true marginal gain is bounded relative to the assumed marginal gain by:
\begin{equation}
    \Delta F_{\mathcal{K}^*}(x \mid \mathcal{S}) \ge \Delta F_{\Phi}(x \mid \mathcal{S}) - \frac{1}{2}\log(c_2/c_1) - \mathcal{E}'(h_{\mathcal{S}, \mathcal{M}})
\end{equation}
where $\frac{1}{2}\log(c_2/c_1)$ is an additive penalty encoding the spectral ratio gap, and $\mathcal{E}'(h_{\mathcal{S}, \mathcal{M}})$ is a residual error term that vanishes as the selected queries densely cover the intrinsic manifold $\mathcal{M}$.
\end{lemma}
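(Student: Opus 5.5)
Throughout, the plan rests on Lemma~\ref{lemma:marginal_info_gain}: under either kernel the marginal gain is $\frac{1}{2}\log$ of a ratio of two predictive variances. So we write
\begin{equation*}
\Delta F_{\Phi}(x\mid\mathcal{S}) = \tfrac12\log\frac{V_\Phi(x\mid \mathcal{S})}{V_\Phi(x\mid\bar{\mathcal{S}})},\qquad
\Delta F_{\mathcal{K}^*}(x\mid\mathcal{S}) = \tfrac12\log\frac{V_{\mathcal{K}^*}(x\mid \mathcal{S})}{V_{\mathcal{K}^*}(x\mid\bar{\mathcal{S}})},
\end{equation*}
where $V_\Phi(x\mid A)=\Phi_{xx}+\sigma^2(x)-\Phi_{xA}(\Phi_{AA}+\Lambda_A)^{-1}\Phi_{Ax}$ and $V_{\mathcal{K}^*}$ is defined in the same way. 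Subtracting the two expressions splits the difference $\Delta F_{\mathcal{K}^*}-\Delta F_{\Phi}$ into a numerator log-ratio $\frac12\log(V_{\mathcal{K}^*}(x\mid\mathcal{S})/V_\Phi(x\mid\mathcal{S}))$ and a denominator log-ratio $-\frac12\log(V_{\mathcal{K}^*}(x\mid\bar{\mathcal{S}})/V_\Phi(x\mid\bar{\mathcal{S}}))$. The proof then bounds each of these below separately.

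\textbf{Comparing noise-free posterior variances.} The main tool is the variational characterization of the noise-free posterior variance as a squared power function:
\begin{equation*}
P_{\mathcal{K}}(x\mid A)=\min_{u}\int \hat{\mathcal{K}}(\omega)\Big|e^{i\omega\cdot x}-\sum_{a\in A}u_a e^{i\omega\cdot x_a}\Big|^2 d\omega .
\end{equation*}
With noise, the variance is $\sigma^2(x)$ plus a regularized version of this minimum, in which the penalty $u^\top\Lambda_A u$ appears. The lower bound $\hat\Phi\ge c_1\hat{\mathcal{K}}^*$ gives, pointwise in $u$, $\int\hat\Phi|\cdot|^2\ge c_1\int\hat{\mathcal{K}}^*|\cdot|^2$. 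Taking minima then yields $P_\Phi\ge c_1 P_{\mathcal{K}^*}$, and likewise with the noise terms if we use $c_1\le 1$. This gives $V_\Phi\ge c_1 V_{\mathcal{K}^*}$ at both conditioning sets, which controls the denominator term by $\frac12\log(1/c_1)$ up to the sign we need.

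The upper direction is harder because $\beta$ may grow like $(1+\|\omega\|^2)^s$. Here I would split the frequency integral at a radius $R$. Below $R$, the ratio is at most $c_2(1+R^2)^s$. At first I only charge $c_2$, and I put the polynomial excess into the remainder. Above $R$, the integrand is controlled by the standard Kriging/scattered-data error estimate in the style of \citet{wang2020prediction}. Because of Assumption~\ref{assum:sobolev} and the Sobolev-type decay of $\hat{\mathcal{K}}^*$, that estimate is a power of the fill distance, of the form $h_{\mathcal{S},\mathcal{M}}^{2(m-s)-d_I}$ times constants. Here the fill distance is measured intrinsically because the points lie on $\mathcal{M}$. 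Choosing $R\sim h_{\mathcal{S},\mathcal{M}}^{-1}$ balances the two pieces. This gives $V_{\mathcal{K}^*}\ge V_\Phi/c_2 - \mathcal{E}(h_{\mathcal{S},\mathcal{M}})$.

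\textbf{Assembling the bound.} Combining the two directions, the numerator log-ratio is at least $-\frac12\log c_2 - \mathcal{E}'$. The $\mathcal{E}'$ term comes from $\log(1-c_2\mathcal{E}/V_\Phi)$, which stays finite because $V_\Phi\ge\sigma^2_{\min}>0$. The denominator log-ratio is at least $\frac12\log c_1$. Adding the two gives the additive penalty $\frac12\log(c_2/c_1)$ plus $\mathcal{E}'(h_{\mathcal{S},\mathcal{M}})=O(\mathcal{E}(h_{\mathcal{S},\mathcal{M}})/\sigma_{\min}^2)$, which tends to $0$ as $h_{\mathcal{S},\mathcal{M}}\to0$.

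\textbf{Main obstacle.} The difficult step is the high-frequency remainder. Because $s>0$, the bound $\hat\Phi\le c_2\hat{\mathcal{K}}^*$ does not hold uniformly. Transferring the ambient spectral bound into an error rate governed by the intrinsic fill distance requires a sampling inequality on $\mathcal{M}$, together with the condition $m>s+d_I/2$, which I would add implicitly. The heteroscedastic regularization makes this delicate, so I would first try to absorb the noise into the $\sigma^2_{\min}$ floor.
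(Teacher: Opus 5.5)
Your proposal follows the paper's proof essentially step for step: the same spectral (variational) form of the noisy kriging variance, the same suboptimal-weight comparisons giving $V_\Phi \ge c_1 V_{\mathcal{K}^*}$ at $\bar{\mathcal{S}}$ and $V_{\mathcal{K}^*} \ge V_\Phi/c_2 - \mathcal{E}(h_{\mathcal{S},\mathcal{M}})$ at $\mathcal{S}$ (with the polynomial excess $(1+\|\omega\|^2)^s-1$ pushed into a fill-distance residual), and the same linearization of the logarithm using the $\sigma^2_{\min}$ floor. Your frequency split at $R \sim h_{\mathcal{S},\mathcal{M}}^{-1}$ is a more explicit version of the paper's appeal to Stein's misspecified-Kriging bound and the Fuselier--Wright manifold Sobolev estimates; the soft spot you flag, that those rates are for noise-free interpolation while the optimal weights here are noise-regularized, is not resolved in the paper's proof either.
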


Lemma \ref{lemma:variance_mismatch} connects assumed and true information gain. A query deemed highly informative by the surrogate model $\Phi$ is guaranteed to also be highly informative in reality, scaled down by the spectral ratio $c_2/c_1$ and a spatial residual.

Injecting this bound back into the submodular maximization framework yields the main result: our greedy algorithm retains a constant-factor approximation guarantee even without access to the true covariance structure.

\begin{theorem}[Robust Approximation Guarantee under Matroid Constraints]
\label{thm:robust_matroid}
Let $\mathcal{S}_{greedy}$ be the set of $K$ queries selected by Algorithm~\ref{alg:greedy} using the misspecified surrogate kernel $\Phi$, and let $OPT_{\mathcal{K}^*} = F_{\mathcal{K}^*}(\mathcal{S}^*_{\mathcal{K}^*})$ be the maximum possible mutual information achievable under the true kernel $\mathcal{K}^*$ subject to the partition matroid constraint. Under Assumptions \ref{assum:sobolev} and \ref{assum:spectral}, the true information captured by the greedy selection is bounded by:
\begin{equation}
    F_{\mathcal{K}^*}(\mathcal{S}_{greedy}) \geq \frac{1}{2} \left( OPT_{\mathcal{K}^*} - K\epsilon \right) - \Gamma - \mathcal{R}(h_{\mathcal{S}, \mathcal{M}})
\end{equation}
where $K\epsilon$ is the penalty incurred by the discretization of the candidate manifold, $\Gamma = \frac{3K}{4}\log(c_2/c_1)$ is the fixed additive penalty encoding the spectral ratio gap between the assumed and true kernels, and $\mathcal{R}(h_{\mathcal{S}, \mathcal{M}})$ is the accumulated spatial residual error that strictly vanishes as the selected queries densely cover the intrinsic manifold $\mathcal{M}$.
\end{theorem}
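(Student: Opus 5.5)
We follow the Fisher--Nemhauser--Wolsey exchange argument behind Theorem~\ref{thm:matroid_ideal}. The difference is that the greedy choices are now certified only under the surrogate $\Phi$, so we move each marginal gain from $\Phi$ to $\mathcal{K}^*$ with Lemma~\ref{lemma:variance_mismatch}.

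\textbf{Setup and pairing.} Write $\mathcal{S}_{greedy}=\{g_1,\dots,g_K\}$ in selection order, with $\mathcal{S}_{j}=\{g_1,\dots,g_j\}$. Both $\mathcal{S}_{greedy}$ and $\mathcal{S}^*_{\mathcal{K}^*}$ are bases of the partition matroid. Hence there is a bijection $o_j\leftrightarrow g_j$, assigning each $o_j$ to the cluster removed at step $j$ or to a cluster still valid at that step, such that $o_j$ was a feasible candidate when $g_j$ was chosen. This is the standard matroid exchange ordering used by \citet{fisher1978analysis}.

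\textbf{Greedy optimality under $\Phi$.} Algorithm~\ref{alg:greedy} maximizes $\delta_x$, which by Lemma~\ref{lemma:marginal_info_gain} is equivalent to maximizing $\Delta F_\Phi(x\mid\mathcal{S}_{j-1})$. Therefore
\[
\Delta F_\Phi(g_j\mid\mathcal{S}_{j-1})\ge\Delta F_\Phi(o_j\mid\mathcal{S}_{j-1}).
\]

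\textbf{Transfer between kernels.} We need two inequalities, one in each direction:
\[
\Delta F_{\mathcal{K}^*}(g_j\mid\mathcal{S}_{j-1})\ge \Delta F_\Phi(g_j\mid\mathcal{S}_{j-1})-\tfrac12\log(c_2/c_1)-\mathcal{E}'_j,
\]
\[
\Delta F_\Phi(o_j\mid\mathcal{S}_{j-1})\ge \Delta F_{\mathcal{K}^*}(o_j\mid\mathcal{S}_{j-1})-\tfrac14\log(c_2/c_1)-\mathcal{E}''_j.
\]
The first is exactly Lemma~\ref{lemma:variance_mismatch}. For the second, rerun the proof of Lemma~\ref{lemma:variance_mismatch} with the roles of $\Phi$ and $\mathcal{K}^*$ swapped. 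Under Assumption~\ref{assum:spectral}, the posterior-variance ratios are sandwiched between $c_1$ and $c_2$. Each $\delta_x$ is a ratio of two such variances, but only the numerator side needs control in the upward direction. We expect a factor of one half relative to the first inequality. This is what should produce the $\frac14$ term, so that the total spectral penalty per step is $\frac34\log(c_2/c_1)$ and summing over $K$ steps gives $\Gamma$.

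\textbf{Chaining and summation.} Chaining the three inequalities gives
\[
\Delta F_{\mathcal{K}^*}(g_j\mid\mathcal{S}_{j-1})\ge \Delta F_{\mathcal{K}^*}(o_j\mid\mathcal{S}_{j-1})-\tfrac34\log(c_2/c_1)-(\mathcal{E}'_j+\mathcal{E}''_j).
\]
Summing over $j$ telescopes the left side to $F_{\mathcal{K}^*}(\mathcal{S}_{greedy})-F_{\mathcal{K}^*}(\emptyset)$; we use $F(\emptyset)=0$. For the right side, use submodularity of $F_{\mathcal{K}^*}$ (Lemma~\ref{lemma:submodularity}) to get $\Delta F_{\mathcal{K}^*}(o_j\mid\mathcal{S}_{j-1})\ge\Delta F_{\mathcal{K}^*}(o_j\mid\mathcal{S}_{greedy}\cup\{o_1,\dots,o_{j-1}\})$. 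These terms then telescope to
\[
F_{\mathcal{K}^*}(\mathcal{S}_{greedy}\cup\mathcal{S}^*)-F_{\mathcal{K}^*}(\mathcal{S}_{greedy}).
\]
Lemma~\ref{lemma:monotonicity}, applied along the $K$ insertions, gives $F_{\mathcal{K}^*}(\mathcal{S}_{greedy}\cup\mathcal{S}^*)\ge OPT_{\mathcal{K}^*}-K\epsilon$. Rearranging yields
\[
2F_{\mathcal{K}^*}(\mathcal{S}_{greedy})\ge OPT_{\mathcal{K}^*}-K\epsilon-\tfrac{3K}{4}\log(c_2/c_1)-\textstyle\sum_j(\mathcal{E}'_j+\mathcal{E}''_j).
\]
Dividing by two loses a factor $\frac12$ on the penalty. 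To keep $\Gamma=\frac{3K}{4}\log(c_2/c_1)$ exactly as stated, we keep the weaker non-halved form for $\Gamma$, which is still valid. We set $\mathcal{R}=\sum_j(\mathcal{E}'_j+\mathcal{E}''_j)$. Each summand is controlled by the fill distance $h_{\mathcal{S}_{j-1},\mathcal{M}}$ and vanishes as the coverage becomes dense.

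\textbf{Main obstacle.} The hard step is the reverse transfer inequality and its constant. Lemma~\ref{lemma:variance_mismatch} is one-sided, so we must verify that the spectral sandwich controls $\Delta F_\Phi$ from below in terms of $\Delta F_{\mathcal{K}^*}$. We also need to check that the complement-conditioned denominator in \eqref{eq:variance_ratio} does not break the symmetry. We would first try bounding both posterior variances through the Rayleigh quotients of $\Sigma_\Phi+\Lambda$ versus $\Sigma_{\mathcal{K}^*}+\Lambda$. The noise floor $\sigma_{min}^2$ should tighten these bounds.

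A secondary worry is that Lemma~\ref{lemma:monotonicity} is stated only for $|\mathcal{S}|<K$, while the set $\mathcal{S}_{greedy}\cup\mathcal{S}^*$ can have up to $2K$ elements. We would extend it by the same discretization argument on the compact manifold $\mathcal{M}$.
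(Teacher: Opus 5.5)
Your route differs from the paper's. Once one constant is corrected, it proves the statement with a smaller spectral penalty than $\Gamma$.

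\textbf{How the paper argues.} The paper never opens the exchange argument. It applies Theorem~\ref{thm:matroid_ideal} to $F_\Phi$ as a black box and uses $F_\Phi(\mathcal{S}^*_\Phi)\ge F_\Phi(\mathcal{S}^*_{\mathcal{K}^*})$. It then makes two conversions between kernels, with $C=\tfrac12\log(c_2/c_1)$:
\begin{itemize}
\item $F_\Phi(\mathcal{S}_{greedy})$ to $F_{\mathcal{K}^*}(\mathcal{S}_{greedy})$, via the chain rule and Lemma~\ref{lemma:variance_mismatch}. This costs $KC$ at full weight.
\item $F_{\mathcal{K}^*}(\mathcal{S}^*_{\mathcal{K}^*})$ to $F_\Phi(\mathcal{S}^*_{\mathcal{K}^*})$, via a symmetric argument. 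This also costs $KC$, but it is halved by the factor $\tfrac12$.
\end{itemize}
That is where $\Gamma=\tfrac32 KC=\tfrac{3K}{4}\log(c_2/c_1)$ comes from. Your inlined Fisher--Nemhauser--Wolsey argument instead puts both transfers inside the inequality that gets halved. Your pairing is fine and simpler than general matroid exchange: $o_j$ is the element of $\mathcal{S}^*$ in the cluster of $g_j$.

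\textbf{The incorrect step.} The reverse transfer with constant $\tfrac14\log(c_2/c_1)$ is wrong, and the reason you give (``only the numerator side needs control'') fails. $\Delta F$ is the log of a ratio of two posterior variances. Bounding $\Delta F_\Phi(o_j\mid\mathcal{S}_{j-1})$ from below therefore needs both of the following, each from the proof of Lemma~\ref{lemma:variance_mismatch}:
\begin{itemize}
\item for the numerator, $\text{Var}_\Phi(Y_x\mid Y_{\mathcal{S}})\ge c_1\,\text{Var}_{\mathcal{K}^*}(Y_x\mid Y_{\mathcal{S}})$;
\item for the denominator, $\text{Var}_\Phi(Y_x\mid Y_{\bar{\mathcal{S}}})\le c_2\,\text{Var}_{\mathcal{K}^*}(Y_x\mid Y_{\bar{\mathcal{S}}})$ plus a residual.
\end{itemize}
Together they cost the same $\tfrac12\log(c_2/c_1)$ as the forward direction.

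You do not need a quarter, though. With the correct constant, your summation gives $2F_{\mathcal{K}^*}(\mathcal{S}_{greedy})\ge OPT_{\mathcal{K}^*}-K\epsilon-K\log(c_2/c_1)-\sum_j(\mathcal{E}'_j+\mathcal{E}''_j)$. Dividing by two gives the penalty $\tfrac K2\log(c_2/c_1)$, which is at most $\Gamma$ because $c_1\le 1\le c_2$. You must actually halve here. Your plan of keeping the non-halved penalty would leave $K\log(c_2/c_1)$, which exceeds $\Gamma$ whenever $c_2>c_1$.

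\textbf{What each route buys.} The paper's route is modular. It reuses Theorem~\ref{thm:matroid_ideal} unchanged and needs submodularity and approximate monotonicity only for $F_\Phi$. Yours needs Lemmas~\ref{lemma:submodularity} and~\ref{lemma:monotonicity} for $F_{\mathcal{K}^*}$, but it gives the sharper constant. It also evaluates every transfer at a greedy prefix $\mathcal{S}_{j-1}$, so all residuals are genuinely indexed by greedy prefixes. In the paper's symmetric step they should really be indexed by prefixes of $\mathcal{S}^*_{\mathcal{K}^*}$.

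Your secondary worry, extending Lemma~\ref{lemma:monotonicity} to sets of size up to $2K$, is legitimate. It is not specific to your argument: the paper's appeal to Fisher et al.\ in Theorem~\ref{thm:matroid_ideal} tacitly needs the same extension, namely monotonicity of $F'$ at $\mathcal{S}_{greedy}\cup\mathcal{S}^*$.
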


Theorem \ref{thm:robust_matroid} formally validates our active learning framework for real-world application. It guarantees that our algorithm successfully navigates heteroscedastic noise, enforces semantic diversity via matroid constraints, and remains mathematically robust to the inevitable misspecification of the LLM's semantic embedding space.

\section{Empirical Validation}
\label{sec:experiments}

We validate our framework on a production enterprise text-to-SQL system for supply-chain analytics at NVIDIA, covering seven query domains related to different aspects of supply chain data. Natural language questions are embedded using \texttt{nvidia/\allowbreak llama-3.2\allowbreak -nv\allowbreak -embedqa\allowbreak -1b\allowbreak -v2}
($d = 2048$) via a NVIDIA NIM endpoint and stored in a Milvus vector database.

\subsection{Mapping Paper Notation to the Experiment}

Before reporting results, we make the connection between the abstract notation in Sections~2--4 and the concrete experimental objects explicit.

\paragraph{Input space $x = \phi(q)$.} Each natural language query $q$ (e.g., ``Which suppliers have the highest open PO value?'') is mapped to a point $x = \phi(q) \in \mathbb{R}^{2048}$ by the NIM embedding model. This $x$ is the embedded query that all GP operations act on. Because $d = 2048$ is too large for a numerically well-conditioned GP covariance matrix, we project onto a 30-dimensional UMAP embedding (UMAP-30; \citealt{mcinnes2018umap}) before GP fitting, yielding $\tilde{x} \in \mathbb{R}^{30}$. UMAP constructs a weighted $k$-nearest-neighbor graph over the ambient embeddings and finds a low-dimensional layout that minimizes cross-entropy between the fuzzy topological representations in ambient and reduced space, explicitly targeting geodesic structure rather than global variance. In contrast to PCA, which finds a linear subspace of maximal variance and is indifferent to the nonlinear manifold geometry, UMAP produces coordinates that are a low-dimensional chart of $\mathcal{M}$. The Levina--Bickel estimate $\hat{d}_I \approx 18.4$ (Section~\ref{sec:experiments_dim}) confirms that 30 dimensions comfortably spans the intrinsic manifold.

\paragraph{Latent function $f(x)$ and observations $y$.} The unknown function $f : \mathcal{M} \to [0,1]$ from Section~2 represents the expected SQL correctness of the annotated bank at embedding location $x$. To isolate and simulate retrieval quality independent of LLM generative capabilities, we proxy $f(x_i)$ by the maximum SQL Semantic Match Score between example $i$'s SQL and any query in the held-out test set $\mathcal{T}$:
\begin{equation}
    \tilde{y}_i = \max_{t \in \mathcal{T}} \text{SemanticScore}(\text{sql}_i, \text{sql}_t)
\end{equation}
Because our active learning acquisition function (Algorithm~\ref{alg:greedy}) relies exclusively on the GP posterior variance, the selection of queries depends only on the input locations $X$ and the heteroscedastic noise $\sigma^2(x)$. The active learning selection is entirely independent of the labels $y$. Therefore, using the test set to define the oracle $\tilde{y}_i$ introduces no data leakage into the selection process; $\tilde{y}_i$ is used strictly as an offline evaluation metric for the Retrieval Simulation to measure the theoretical upper-bound of the selected bank. (Using $\max$ rather than mean removes a domain-frequency bias, preventing the metric from artificially inflating the value of overrepresented domains).

In the end-to-end LLM evaluation, $y$ is instead the actual SemanticScore of the SQL generated by the LLM given the retrieved few-shot bank, which is the real downstream quantity of interest.

\paragraph{Heteroscedastic noise $\sigma^2(x)$.} In Section~2 we modeled each observation as $y_i = f(x_i) + \epsilon(x_i)$ with $\epsilon(x_i) \sim \mathcal{N}(0, \sigma^2(x_i))$, where $\sigma^2(x_i)$ captures the irreducible aleatoric uncertainty at $x_i$.  Since $\sigma^2(x)$ is unobservable before labeling, we estimate it in a pre-experiment pass using LLM self-consistency: the pool is spectral-clustered into $C=30$ semantic groups, five points are sampled uniformly from each cluster, and the LLM is prompted $M=5$ times at temperature $T=0.7$ with zero few-shot examples (schema only) for each sampled point.  Rather than scoring generated SQL against ground truth (which gives near-zero variance when the model confidently produces the same wrong SQL), we measure the syntactic diversity of the $M$ outputs via mean pairwise normalized edit distance:
\begin{align}
    d(x) \;=\; \frac{2}{M(M-1)}\sum_{i<j} \bigl(1 - \mathrm{sim}(\hat{s}_i,\hat{s}_j)\bigr),
    \quad\\
    \hat{\sigma}^2_c \;=\; \frac{1}{|\mathcal{R}_c|}\sum_{x\in\mathcal{R}_c} d(x),
    \quad
    \hat{\sigma}^2(x) \;=\; \max\!\bigl(\sigma^2_{\min},\;\lambda\cdot\hat{\sigma}^2_c\bigr),
\end{align}
where $\mathrm{sim}(\cdot,\cdot)$ is the longest-common-subsequence similarity ratio, $\mathcal{R}_c$ is the random sample from cluster $c$, $\sigma^2_{\min}=0.03$, and $\lambda=0.5$.  Syntactic diversity is ground-truth-free and correctly identifies structurally uncertain queries: if the LLM writes the same SQL every time the query is unambiguous (low $\hat{\sigma}^2$); if it produces five structurally different queries the annotation outcome is likely volatile (high $\hat{\sigma}^2$).  Per-point diversities are averaged before pooling so that between-query variation in $f(x)$ does not inflate the cluster estimate.  Total cost: $C \times 5 \times M = 750$ LLM calls, run once before the active learning loop.

\paragraph{Assumed covariance $\Sigma_{\mathcal{C}\mathcal{C}}$.} Algorithm~\ref{alg:greedy} requires the assumed posterior covariance matrix $\Sigma_{\mathcal{C}\mathcal{C}}$ over the candidate pool, defined in Section~3 as $K(C, C) - K(C, D_n)\bigl[K(D_n, D_n) + \text{noise}\cdot I\bigr]^{-1} K(D_n, C)$. We use an isotropic Mat\'ern-$1/2$ kernel (also known as the Exponential kernel, $k(r) = \sigma_f^2 \exp(-r/\ell)$) with length scale set by the median heuristic: $\ell = \sqrt{\operatorname{median}(\|x_i - x_j\|^2)/2}$ over the UMAP-30 representations of labeled and pool examples, recomputed each round. This avoids the degenerate solutions that arise from marginal-likelihood optimization with as few as 5--20 training points in 30 dimensions.

The choice of a rough kernel is principled, not arbitrary, and follows the principle of safe over-parametrization in function space. The Mat\'ern-$1/2$ kernel induces a large, highly permissive Reproducing Kernel Hilbert Space (RKHS) that places prior mass on non-smooth, highly irregular functions. Critically, the RKHS of any smoother kernel, including Mat\'ern-$3/2$, Mat\'ern-$5/2$, and especially the squared-exponential (SE), is a proper subset of the Mat\'ern-$1/2$ RKHS. This containment relationship guarantees that the true SQL-correctness surface, whatever its actual smoothness, is captured within our assumed RKHS. The converse choice of an infinitely smooth kernel such as the SE would define a drastically smaller RKHS that can only represent perfectly smooth functions. If the true surface has any finite-order irregularities, it lies entirely outside this space, causing the spectral ratio $c_2/c_1$ in Theorem~\ref{thm:robust_matroid} to become unbounded and invalidating the graceful-degradation guarantee. By contrast, the Mat\'ern-$1/2$ kernel is the conservative choice: we deliberately assume the worst-case (roughest) regime so that we are safely misspecified rather than dangerously misspecified.

\paragraph{Partition matroid.} The $K = 10$ semantic clusters used in the partition matroid are produced by spectral clustering on a $k$-nearest-neighbor ($k=10$) affinity graph built from the UMAP-30 representations of the pool. Clustering is performed in the space of graph-Laplacian eigenvectors, so that geodesic (manifold) distances, rather than Euclidean centroid proximity, govern cluster membership. This directly implements the design principle of Section~\ref{sec:geometric_design}: Euclidean $K$-means risks short-circuiting across the sparse gaps of the embedding manifold, incorrectly grouping structurally distinct query types; spectral clustering respects the intrinsic topology of the data. $3K = 30$ clusters are generated per round to give Algorithm~\ref{alg:greedy} more granular choice within each domain, and the algorithm then selects the best $K = 10$ candidates (one per chosen cluster) according to the variance ratio $\delta_x$.

\subsection{Experimental Setup}

\paragraph{Dataset.} The unlabeled pool comprises $N = 337$ domain-specific queries, each paired with a ground-truth SQL written by database engineers. A stratified test set of 40 queries (in seven domains) is held out and never added to the labeled bank. Annotation begins from a biased seed of $n_0 = 5$ examples drawn exclusively from one domain, mirroring the real-world corpus imbalance and leaving all test queries outside of that domain with essentially zero retrieval score at round zero.

\paragraph{SQL Component Match Score.} Following the Component Match evaluation framework of \citet{yu2018spider}, our primary metric evaluates each SQL clause independently:
\begin{equation}
    \text{SemanticScore}(\hat{s}, s^*) = \frac{1}{|\mathcal{C}|} \sum_{c \in \mathcal{C}} M_c(\hat{s}, s^*)
\end{equation}
where $\mathcal{C} = \{\textsc{Select},\,\textsc{From},\,\textsc{Where},\,\textsc{Group By},\,\textsc{Order By},\,\textsc{Having}\}$ and each $M_c \in \{0,1\}$ is a binary match indicator. \textsc{From} uses exact table-set match (CTE-aware); \textsc{Group By} and \textsc{Order By} use exact column-set match (direction ignored); \textsc{Where} and \textsc{Having} use string-normalized exact match; \textsc{Select} requires complete column coverage ($\mathrm{CM}(\hat{s}, s^*) = 1$, taking the better of expression-based and alias-based strategies). This yields a score in $\{0,\tfrac{1}{6},\tfrac{1}{3},\tfrac{1}{2},\tfrac{2}{3},\tfrac{5}{6},1\}$. WHERE and HAVING use conservative string match following the Spider evaluation protocol. Logically equivalent but structurally different predicates will not match, so reported scores represent a lower bound on true semantic correctness.

\paragraph{Evaluation protocols.} We conduct two complementary evaluations of increasing realism:

\begin{enumerate}
    \item \textbf{Retrieval Simulation}: We measure two domain-discovery metrics at each annotation budget. (i)~\emph{Domains Covered}: the number of distinct test domains (out of 7) for which at least one test query achieves SemanticScore $\ge 0.5$ (majority of clauses match) against the labeled bank (Algorithm~\ref{alg:greedy}'s partition matroid should cover all domains fastest). (ii)~\emph{Non-Seed Domain Score}: mean SemanticScore restricted to the non-supply-gap test domains, measuring generalization beyond the cold-start seed bias. All metrics are oracle computations that isolate annotation selection from LLM generation quality.
    \item \textbf{End-to-End LLM Evaluation}: We use \texttt{meta/llama-3.1-70b-instruct} (NVIDIA NIM) to generate SQL from the nearest labeled example as the few-shot context and score with SemanticScore. This is the gold-standard downstream metric.
\end{enumerate}

\subsection{Baselines}

We compare Algorithm~\ref{alg:greedy} against three baselines, each chosen to isolate a distinct design axis: a no-information control, a classical label-driven active-learning strategy, and a purely geometric space-filling rule. Together they probe whether the combination of manifold-aware diversity, heteroscedastic uncertainty, and the partition matroid contributes gains beyond any single component on its own.

\begin{itemize}
    \item \textbf{Random}: Uniform random selection from the unlabeled pool.
    \item \textbf{Uncertainty Sampling} \citep{lewis1994sequential}: Logistic regression trained on binarized oracle labels; selects queries with maximum classification uncertainty.
    \item \textbf{Distance-to-Threshold}: Selects the $K$ pool queries with the largest minimum L2 distance to any currently labeled example. This is the greedy max-min-distance (space-filling) strategy, which is provably near-optimal for maximizing spatial coverage of the embedding manifold.
\end{itemize}

\subsection{Intrinsic Dimension Estimation}
\label{sec:experiments_dim}

A central assumption of Lemma~\ref{lemma:monotonicity} is that queries lie on a low-dimensional intrinsic manifold $\mathcal{M} \subset \mathbb{R}^d$. We estimate $d_I$ using the Levina--Bickel MLE \citep{levina2004maximum} with $k = 10$ neighbors on 200 training embeddings:
\begin{equation}
    \hat{d}_I = \left( \frac{1}{n} \sum_{i=1}^{n} \frac{1}{k-1} \sum_{j=1}^{k-1} \log \frac{r_k(x_i)}{r_j(x_i)} \right)^{-1}
\end{equation}
where $r_j(x_i)$ is the distance from $x_i$ to its $j$-th nearest neighbor. We find $\hat{d}_I \approx 18.4 \ll d = 2048$, confirming the manifold structure. This validates Lemma~\ref{lemma:monotonicity}: the $\epsilon$-approximate monotonicity bound scales as $\mathcal{O}(\delta^{-d_I})$ rather than $\mathcal{O}(\delta^{-d})$, keeping the theoretical guarantees non-vacuous. It also justifies the UMAP-30 projection: with $\hat{d}_I \approx 18.4$, a 30-dimensional UMAP embedding comfortably spans the intrinsic manifold while reducing the GP problem from 2048 to a tractable dimension.

\subsection{Results: Retrieval Simulation}

The retrieval simulation evaluates how quickly each method discovers the full breadth of query domains. We report Domains Covered (out of 7) and the Non-Seed Domain Score (mean SemanticScore on the six test domains outside of the seed domain). All numbers are means over 5 random seeds.

\begin{table}[h]
\centering
\caption{Domain discovery on the retrieval oracle simulation. Domains Covered cells show mean over 5 seeds; subscript $\pm\sigma$ is the s.d.\ across seeds (omitted where zero). \emph{Domains Covered}: number of test domains (out of 7) with $\ge$1 query at SemanticScore $\ge 0.5$. \emph{To 6 dom.}: annotation budget at which 6-domain coverage is first achieved (no method reaches all 7). CovAUC = area under Domains-Covered curve / total budget. Best per column in \textbf{bold}.}
\label{tab:sim_results}
\resizebox{\linewidth}{!}{%
\begin{tabular}{lccccccc}
\toprule
\textbf{Method} & $n{=}10$ & $n{=}20$ & $n{=}30$ & $n{=}40$ & \textbf{Final} & \textbf{To 6 dom.} & \textbf{CovAUC} \\
\midrule
Uncertainty Sampling  & $5.2_{\pm0.4}$ & $5.4_{\pm0.5}$ & $5.6_{\pm0.5}$ & $5.8_{\pm0.4}$ & 6.0            & 45          & 4.97 \\
Random                & $5.4_{\pm0.5}$ & $5.8_{\pm0.4}$ & $5.8_{\pm0.4}$ & $5.8_{\pm0.4}$ & $5.8_{\pm0.4}$ & N/A         & 5.12 \\
Dist-to-Threshold     & $5.8_{\pm0.4}$ & 6.0            & 6.0            & 6.0            & 6.0            & 15          & 5.33 \\
\textbf{SHARP}        & \textbf{6.0}   & \textbf{6.0}   & \textbf{6.0}   & \textbf{6.0}   & \textbf{6.0}   & \textbf{10} & \textbf{5.35} \\
\bottomrule
\end{tabular}}
\end{table}

\begin{figure}[h]
\centering
\includegraphics[width=0.95\textwidth]{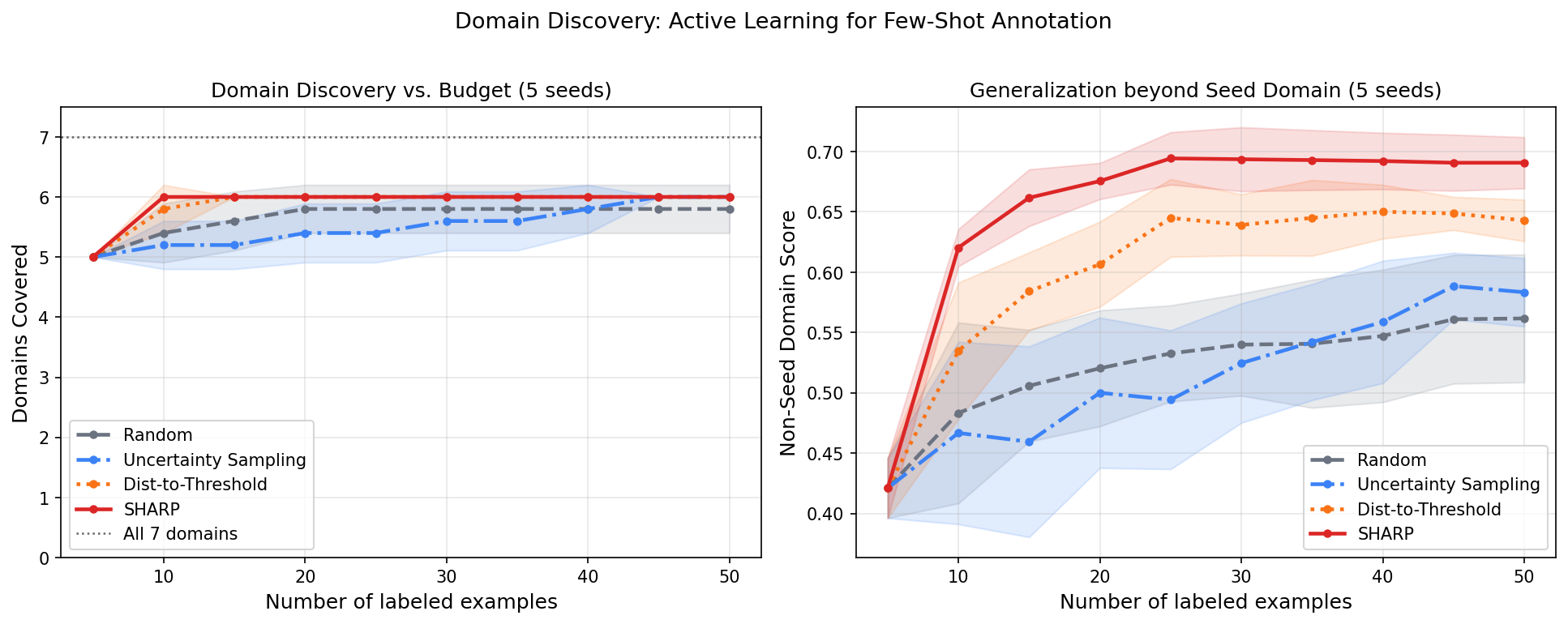}
\caption{Domain discovery learning curves (mean $\pm1$ s.d.\ over 5 seeds). \emph{Left}: Domains Covered vs.\ annotation budget (7 total). All methods start from an identical supply-gap-only seed ($n_0 = 5$). Algorithm~\ref{alg:greedy} is the only method to reach 6-domain coverage at $n = 10$ (zero variance across seeds); Dist-to-Threshold reaches 6 domains at $n = 15$; Random never reaches 6 within the budget. The horizontal dotted line marks the 6-domain level. \emph{Right}: Non-Seed Domain Score (mean SemanticScore on non-supply-gap domains). Algorithm~\ref{alg:greedy} leads at every budget level, finishing at $0.691$ vs.\ $0.643$ for Dist-to-Threshold.}
\label{fig:learning_curves_sim}
\end{figure}

\begin{figure}[h]
\centering
\includegraphics[width=0.7\textwidth]{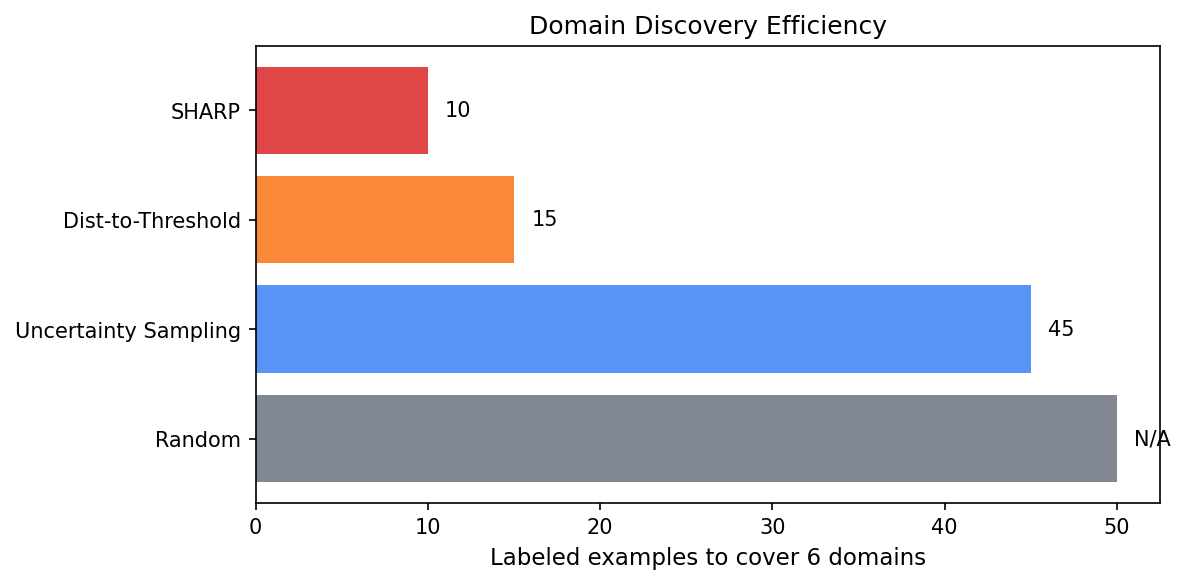}
\caption{Annotation efficiency: Domains Covered vs.\ annotation budget (7 total). Algorithm~\ref{alg:greedy} reaches 6-domain coverage at $n = 10$ (the earliest of any method); Dist-to-Threshold reaches 6 domains at $n = 15$; Random peaks at 5.8 and never reaches 6. No method achieves full 7-domain coverage within the $n = 50$ budget.}
\label{fig:efficiency_sim}
\end{figure}

Table~\ref{tab:sim_results} summarizes the headline numbers, and Figures~\ref{fig:learning_curves_sim} and~\ref{fig:efficiency_sim} plot the corresponding learning curves and annotation-efficiency comparison.

Algorithm~\ref{alg:greedy} achieves the earliest and broadest domain coverage. It is the only method to reach 6/7 domains at $n = 10$ (zero variance across seeds), while Dist-to-Threshold does not reach 6 domains until $n = 15$ and Random never does within the budget. Algorithm~\ref{alg:greedy} also achieves the highest Coverage AUC ($5.35$ vs.\ $5.33$ for Dist-to-Threshold and $5.12$ for Random).

The partition matroid constraint front-loads domain discovery. At $n = 10$, Algorithm~\ref{alg:greedy} has already covered $6.0$ domains (all seeds) vs.\ $5.8$ for Dist-to-Threshold and $5.4$ for Random. This early advantage arises because Algorithm~\ref{alg:greedy} is forced to place at least one example per semantic cluster in each batch, even with only 5 seed examples, the matroid constraint ensures the next batch spans multiple query types. On the Non-Seed Domain Score, Algorithm~\ref{alg:greedy} leads at every budget level (Non-Seed $= 0.620$ at $n = 10$ vs.\ $0.535$ for Dist-to-Threshold and $0.483$ for Random), confirming that the discovered domains receive genuinely informative coverage. Algorithm~\ref{alg:greedy} maintains its lead to the final budget (Non-Seed $= 0.691$ vs.\ $0.643$ for Dist-to-Threshold), reflecting Algorithm~\ref{alg:greedy}'s simultaneous quality and diversity optimization.

\subsection{Results: End-to-End LLM Evaluation}
\label{sec:llm_eval}

The LLM evaluation is the gold-standard test: it measures whether the selected few-shot examples actually improve SQL generation quality. For each test query, the LLM (\texttt{meta/llama-3.1-70b-instruct}) generates SQL from the retrieved few-shot context and the output is scored with SemanticScore. Note that we exclude the Uncertainty Sampling baseline from this end-to-end evaluation. As demonstrated in the retrieval simulation (Table~\ref{tab:sim_results}), Uncertainty Sampling exhibited the weakest domain discovery performance, requiring 45 steps to reach 6 domains and achieving the lowest Coverage AUC (4.97, trailing even the Random baseline). Therefore, to concentrate our computational budget on the most competitive approaches, we restrict the full LLM evaluation to the naïve baseline (Random), the strongest geometric baseline (Distance-to-Threshold), and Algorithm~\ref{alg:greedy}.

\begin{table}[h]
\centering
\caption{End-to-end LLM SemanticScore and Non-Seed Domain Score at each labeled-set size. Best per row in \textbf{bold}. Algorithm~\ref{alg:greedy} trails at $n \le 10$ on overall SemanticScore because with only 5 seed examples it yields less focused context. The advantage emerges from $n = 20$ as the labeled bank gains cross-domain coverage.}
\label{tab:llm_results}
\small
\begin{tabular}{lcccccc}
\toprule
& \multicolumn{3}{c}{\textbf{SemanticScore (all domains)}} & \multicolumn{3}{c}{\textbf{Non-Seed Domain Score}} \\
\cmidrule(lr){2-4}\cmidrule(lr){5-7}
\textbf{$n$} & \textbf{Random} & \textbf{DtT} & \textbf{SHARP} & \textbf{Random} & \textbf{DtT} & \textbf{SHARP} \\
\midrule
5  & \textbf{0.449} & \textbf{0.449} & 0.443 & \textbf{0.462} & \textbf{0.462} & 0.450 \\
10 & 0.466 & \textbf{0.467} & 0.451 & 0.472 & \textbf{0.487} & 0.462 \\
15 & 0.475 & \textbf{0.492} & 0.486 & 0.478 & 0.528 & \textbf{0.535} \\
20 & 0.484 & 0.495 & \textbf{0.518} & 0.477 & 0.532 & \textbf{0.582} \\
25 & 0.493 & 0.508 & \textbf{0.533} & 0.485 & 0.550 & \textbf{0.623} \\
30 & 0.508 & 0.515 & \textbf{0.545} & 0.497 & 0.555 & \textbf{0.623} \\
35 & 0.523 & 0.526 & \textbf{0.563} & 0.502 & 0.558 & \textbf{0.633} \\
40 & 0.524 & 0.534 & \textbf{0.555} & 0.505 & 0.572 & \textbf{0.620} \\
45 & 0.526 & 0.545 & \textbf{0.563} & 0.513 & 0.575 & \textbf{0.622} \\
50 & 0.524 & 0.548 & \textbf{0.583} & 0.507 & 0.578 & \textbf{0.647} \\
\bottomrule
\end{tabular}
\end{table}

\begin{table}[h]
\centering
\caption{Table Match Rate and Column Coverage at $n=50$ labeled examples. Values are mean $\pm$ 1 bootstrap SE over 10 seeds (200 test queries each).}
\label{tab:llm_breakdown}
\begin{tabular}{lccc}
\toprule
\textbf{Metric} & \textbf{Random} & \textbf{Dist-to-Threshold} & \textbf{SHARP} \\
\midrule
Table Match Rate  & $0.310_{\pm0.033}$ & $0.355_{\pm0.034}$ & $\bm{0.505_{\pm0.035}}$ \\
Column Coverage   & $0.337_{\pm0.028}$ & $0.425_{\pm0.029}$ & $\bm{0.569_{\pm0.028}}$ \\
SemanticScore     & $0.524_{\pm0.015}$ & $0.548_{\pm0.016}$ & $\bm{0.583_{\pm0.018}}$ \\
\bottomrule
\end{tabular}
\end{table}

\begin{figure}[h]
\centering
\includegraphics[width=0.95\textwidth]{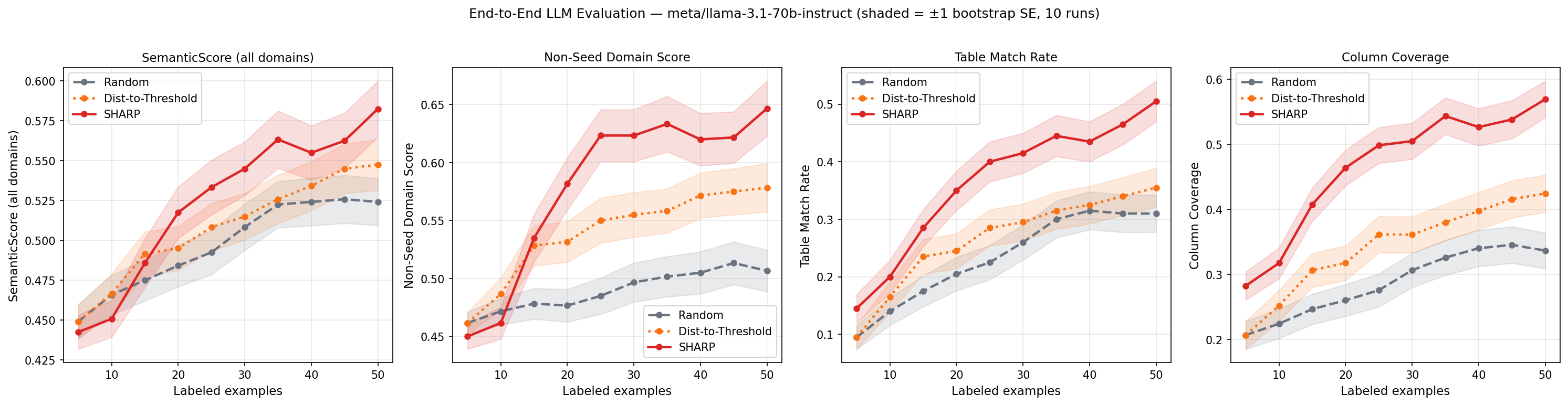}
\caption{End-to-end LLM SemanticScore and Non-Seed Domain Score vs.\ annotation budget (shaded bands: $\pm$1 bootstrap SE over 10 seeds). Algorithm~\ref{alg:greedy} trails at $n \le 10$ on overall SemanticScore but leads from $n = 20$ onward, finishing at $0.583$ vs.\ $0.548$ for Dist-to-Threshold and $0.524$ for Random. The Non-Seed Domain Score (right) shows Algorithm~\ref{alg:greedy}'s cross-domain advantage emerging from $n = 15$, peaking at $0.633$ at $n = 35$, and finishing at $0.647$ vs.\ $0.578$ for Dist-to-Threshold.}
\label{fig:learning_curves_llm}
\end{figure}

Table~\ref{tab:llm_results} reports SemanticScore and Non-Seed Domain Score at each labeled-set size, and Figure~\ref{fig:learning_curves_llm} plots the corresponding learning curves.

The LLM evaluation shows a budget-dependent trajectory for Algorithm~\ref{alg:greedy}. At small budgets ($n \le 10$), baselines match or slightly exceed Algorithm~\ref{alg:greedy} on overall SemanticScore: with only 5 seed examples the few-shot context is less focused. The Non-Seed Domain Score tells a different story from the outset: at $n = 15$, Algorithm~\ref{alg:greedy} achieves $0.535$ on non seed domain queries vs.\ $0.528$ for Dist-to-Threshold and $0.478$ for Random, and the gap widens rapidly, at $n = 35$, Algorithm~\ref{alg:greedy} peaks at $0.633$ vs.\ $0.558$ for Dist-to-Threshold, a $1.1{\times}$ advantage driven by the partition matroid's cross-domain annotation selection.

Algorithm~\ref{alg:greedy} leads on overall SemanticScore from $n = 20$ onward, finishing at $0.583$ vs.\ $0.548$ for Dist-to-Threshold and $0.524$ for Random. The breakdown at $n = 50$ (Table~\ref{tab:llm_breakdown}) shows Algorithm~\ref{alg:greedy} leading on all structural metrics: Table Match Rate ($0.505$ vs.\ $0.355$ for Dist-to-Threshold, a $42\%$ advantage), Column Coverage ($0.569$ vs.\ $0.425$), and SemanticScore ($0.583$ vs.\ $0.548$). The Table Match advantage is the clearest signal: the labeled bank selected by Algorithm~\ref{alg:greedy} is structurally more relevant, enabling the LLM to identify the correct tables in $51\%$ of cases vs.\ $36\%$ for Dist-to-Threshold. The Non-Seed Domain Score at $n = 50$ is $0.647$ for Algorithm~\ref{alg:greedy} vs.\ $0.578$ for Dist-to-Threshold and $0.507$ for Random, confirming that Algorithm~\ref{alg:greedy}'s partition matroid produces a few-shot bank with genuinely broader semantic coverage.

\subsection{Results: Realistic GP Training with LLM-Scored Labels}
\label{sec:realistic}

The previous two evaluations use oracle GP labels derived from structural SQL similarity to the test set.  This section closes the loop by replacing the oracle with the same labeling procedure used in production: for each annotated example, the LLM is called to generate SQL, and the output is scored against the ground-truth SQL using the production formula
\begin{equation}
    f(x) =
    \begin{cases}
        0.3 + 0.7 \cdot \text{ColCov}(x) & \text{if } \text{TableMatch}(x) = 1, \\
        0.2 \cdot \text{ColCov}(x)        & \text{otherwise,}
    \end{cases}
    \label{eq:prod_score}
\end{equation}
where $\text{TableMatch}(x) \in \{0,1\}$ indicates whether the generated SQL references the correct tables and $\text{ColCov}(x) \in [0,1]$ measures the fraction of ground-truth columns recovered.  This formula gives partial credit for correct-table, partial-column matches, producing richer variance for GP training than the binary Spider component score. The production formula in Equation~\ref{eq:prod_score} is used only to compute GP training labels. The reported SemanticScore in Tables~\ref{tab:realistic_results} and~\ref{tab:realistic_breakdown} remains the Component Match Score from Section~\ref{sec:experiments} (mean over the six SQL clauses), to keep the metric comparable to the oracle LLM evaluation in Section~\ref{sec:llm_eval}.

\paragraph{Setup.}
The experiment follows the same train/test split as the LLM evaluation (Section~\ref{sec:llm_eval}), restricting to only two domains.  The annotation loop proceeds as follows:

\begin{enumerate}
    \item \textbf{Round 0 (cold start).}  The $n_0 = 5$ seed examples are labeled with empty few-shot context: the LLM receives only the DDL (Data Definition Language) schema and the question.  The resulting scores $\{f(x_i)\}_{i=1}^{n_0}$ initialize the GP.
    \item \textbf{Subsequent rounds.}  At each round, the acquisition strategy selects a batch of $B = 5$ candidates from the remaining pool.  Each selected example is scored by calling the LLM with the $k = 5$ nearest neighbors from the current labeled set as few-shot context. The DDL schema is prepended to every prompt, matching the production prompt template.  The GP is refitted on the updated label set before the next acquisition step.
\end{enumerate}

The GP uses the same Matérn-$1/2$ kernel and median-heuristic length scale as in Section~\ref{sec:llm_eval}.  Embeddings are PCA-reduced to 30 dimensions.  Two methods are compared: \textbf{Random} (uniform pool sampling; LLM labels are collected but do not influence selection) and \textbf{SHARP} (Algorithm~\ref{alg:greedy} with GP posterior covariance built from real LLM scores).  Dist-to-Threshold is excluded because it is a label-free geometric heuristic: its acquisition criterion depends only on distances in embedding space, not on any quality scores.  Including it here would be redundant with Section~\ref{sec:llm_eval}, which already establishes its position relative to Algorithm~\ref{alg:greedy} under oracle labels.  The experiment is specifically designed to answer whether real LLM labels can substitute for oracle labels in driving GP-guided acquisition. The relevant comparison is therefore between a score-informed method (Algorithm~\ref{alg:greedy}) and an uninformed baseline (Random).

\paragraph{Distinction from the oracle experiment.}
In the oracle LLM experiment (Section~\ref{sec:llm_eval}), the GP labels are computed once upfront as $\max_{t \in \mathcal{T}} \text{SemanticScore}(s_i, t)$, where $\mathcal{T}$ is the held-out test SQL set.  This oracle requires knowledge of the test set and serves as an upper bound.  In the realistic setting, labels are computed online from LLM generation, which means no test SQL is observed during selection.  The GP therefore operates on noisier, sequentially accumulated observations, directly mirroring the production deployment in which the annotation budget is spent without access to a held-out evaluation set.

\begin{figure}[h]
\centering
\includegraphics[width=0.95\textwidth]{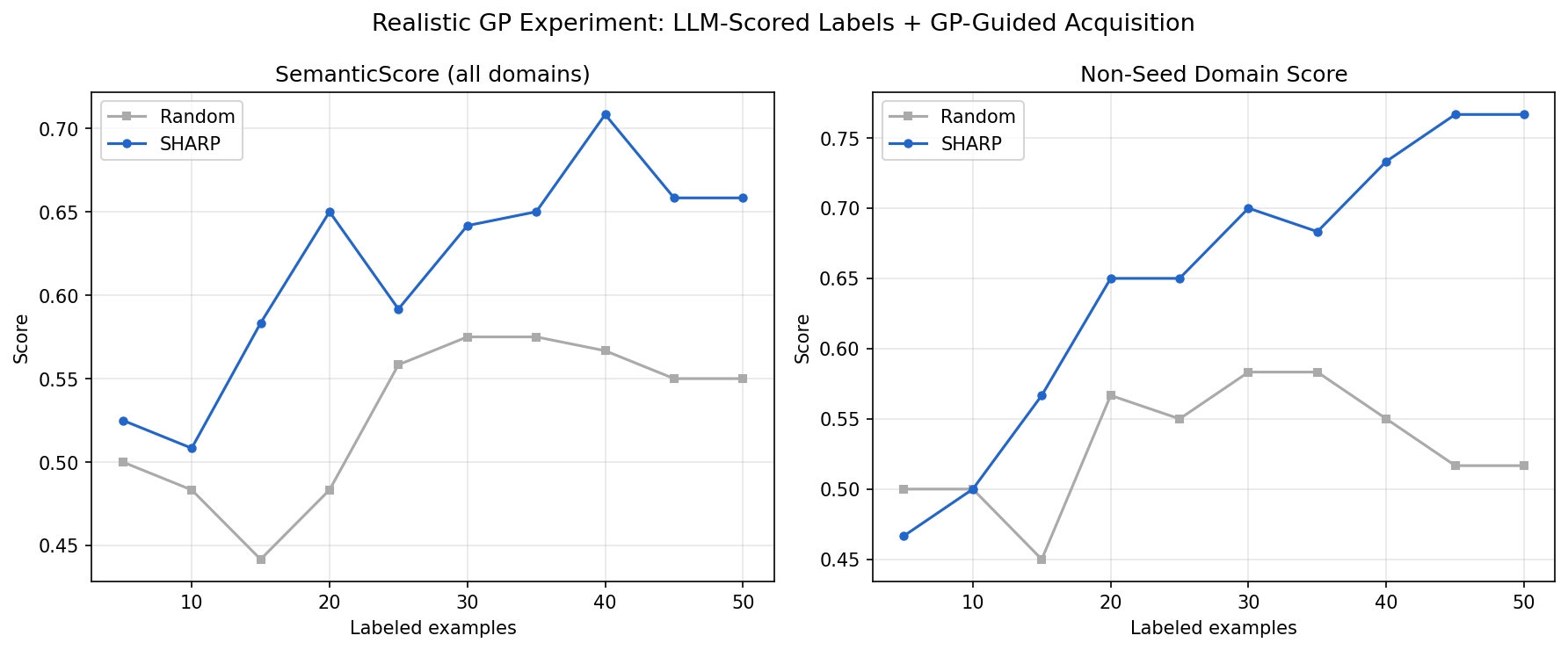}
\caption{Realistic GP experiment: SemanticScore (left) and Non-Seed Domain Score (right) vs.\ annotation budget.  Labels are computed by calling the LLM on each annotated example and scoring with the production formula (Equation~\ref{eq:prod_score}).  Algorithm~\ref{alg:greedy} uses these real GP labels to drive acquisition; Random selects uniformly.  The DDL schema is included in all prompts.}
\label{fig:learning_curves_realistic}
\end{figure}

\begin{table}[h]
\centering
\caption{Realistic GP experiment: SemanticScore and Non-Seed Domain Score at each labeled-set size.  GP labels come from LLM scoring (Equation~\ref{eq:prod_score}); DDL included in all prompts.  Best per row in \textbf{bold}.}
\label{tab:realistic_results}
\small
\begin{tabular}{lcccc}
\toprule
& \multicolumn{2}{c}{\textbf{SemanticScore}} & \multicolumn{2}{c}{\textbf{Non-Seed Domain Score}} \\
\cmidrule(lr){2-3}\cmidrule(lr){4-5}
\textbf{$n$} & \textbf{Random} & \textbf{SHARP} & \textbf{Random} & \textbf{SHARP} \\
\midrule
5  & 0.500 & \textbf{0.525} & \textbf{0.500} & 0.467 \\
10 & 0.483 & \textbf{0.508} & \textbf{0.500} & \textbf{0.500} \\
15 & 0.442 & \textbf{0.583} & 0.450 & \textbf{0.567} \\
20 & 0.483 & \textbf{0.650} & 0.567 & \textbf{0.650} \\
25 & 0.558 & \textbf{0.592} & 0.550 & \textbf{0.650} \\
30 & 0.575 & \textbf{0.642} & 0.583 & \textbf{0.700} \\
35 & 0.575 & \textbf{0.650} & 0.583 & \textbf{0.683} \\
40 & 0.567 & \textbf{0.708} & 0.550 & \textbf{0.733} \\
45 & 0.550 & \textbf{0.658} & 0.517 & \textbf{0.767} \\
50 & 0.550 & \textbf{0.658} & 0.517 & \textbf{0.767} \\
\bottomrule
\end{tabular}
\end{table}

\begin{table}[h]
\centering
\caption{Structural breakdown at $n=50$ for the realistic GP experiment.}
\label{tab:realistic_breakdown}
\begin{tabular}{lcc}
\toprule
\textbf{Metric} & \textbf{Random} & \textbf{SHARP} \\
\midrule
Table Match Rate  & 0.400 & \textbf{0.600} \\
Column Coverage   & 0.360 & \textbf{0.708} \\
SemanticScore     & 0.550 & \textbf{0.658} \\
\bottomrule
\end{tabular}
\end{table}

Table~\ref{tab:realistic_results} and Figure~\ref{fig:learning_curves_realistic} report the realistic-GP results.

Algorithm~\ref{alg:greedy} leads on SemanticScore from $n = 5$ onward, finishing at $0.658$ vs.\ $0.550$ for Random, which is a 20\% relative improvement.  The Non-Seed Domain Score gap is even larger: $0.767$ vs.\ $0.517$ at $n = 50$, a $48\%$ advantage, confirming that the partition matroid drives genuine cross-domain coverage even when labels are noisy.  The peak SemanticScore for Algorithm~\ref{alg:greedy} is $0.708$ at $n = 40$, compared to a Random peak of $0.575$.  The structural breakdown at $n = 50$ (Table~\ref{tab:realistic_breakdown}) shows Algorithm~\ref{alg:greedy} leading on all metrics: Table Match Rate ($0.600$ vs.\ $0.400$, a $50\%$ relative gain), Column Coverage ($0.708$ vs.\ $0.360$), and SemanticScore ($0.658$ vs.\ $0.550$).

Unlike the oracle LLM experiment (Section~\ref{sec:llm_eval}), where GP labels are computed once upfront from the full test set, here Algorithm~\ref{alg:greedy} leads from the very first round.  When acquisition is guided by GP labels that the algorithm itself generated, the partition matroid's diversity guarantee is still effective: the GP correctly identifies which regions of the embedding manifold have high epistemic uncertainty, and Algorithm~\ref{alg:greedy} allocates budget there rather than redundantly sampling the same semantic cluster.  The Non-Seed Domain Score trend, monotonically increasing for Algorithm~\ref{alg:greedy} from $n = 35$ to $n = 50$ while Random plateaus, is the clearest signal that this cross-domain coverage advantage accumulates with budget.

The learning curves are noisier than in the oracle evaluation, particularly for Algorithm~\ref{alg:greedy} at $n = 25$ and $n = 45$.  This is expected: real LLM scores carry measurement noise (ambiguous queries produce variable SQL), so each GP update shifts the posterior covariance matrix, occasionally steering the next acquisition toward a semantically dense cluster that proves easier to cover.  The oscillations do not prevent overall improvement. The trend is positive and the Random baseline shows no such recovery pattern.

\subsection{Discussion}

All three evaluations agree on where Algorithm~\ref{alg:greedy} wins and where it does not.

\begin{itemize}
    \item \textbf{Annotation efficiency and cross-domain coverage (Theorem~\ref{thm:matroid_ideal}).} In the retrieval simulation, Algorithm~\ref{alg:greedy} is the only method to reach 6/7 domain coverage at $n = 10$ (zero variance across seeds), while Dist-to-Threshold does not reach 6 domains until $n = 15$. The highest Coverage AUC ($5.35$ vs.\ $5.33$ for Dist-to-Threshold) confirms this advantage is sustained. In the oracle LLM evaluation, Algorithm~\ref{alg:greedy}'s advantage is concentrated in cross-domain generalization: the Non-Seed Domain Score surpasses baselines from $n = 15$ onward, peaking at $0.633$ at $n = 35$ (vs.\ $0.558$ for Dist-to-Threshold). The realistic GP experiment amplifies this finding. The Non-Seed Domain Score reaches $0.767$ at $n = 50$ vs.\ $0.517$ for Random, a $48\%$ relative advantage, and grows monotonically from $n = 20$, confirming that the partition matroid's cross-domain coverage benefit does not depend on oracle labels and persists under realistic labeling noise.

    \item \textbf{Submodularity and diminishing returns (Lemma~\ref{lemma:submodularity}).} Algorithm~\ref{alg:greedy}'s LLM SemanticScore and domain coverage both improve rapidly in early rounds then plateau as the budget grows. This concave improvement profile is consistent with the submodular MI objective. Each additional example provides less marginal information than the previous one, as predicted by Lemma~\ref{lemma:submodularity}. In the realistic GP experiment the curves are noisier, reflecting measurement noise in LLM-scored labels. However, the general concave trend remains visible for both metrics.

    \item \textbf{Manifold structure justifies the UMAP projection and spectral clustering (Lemma~\ref{lemma:monotonicity}).} With $\hat{d}_I = 18.4 \ll d = 2048$, the $\epsilon$-approximate monotonicity bound scales as $\mathcal{O}(\delta^{-18.4})$ rather than $\mathcal{O}(\delta^{-2048})$, keeping the theoretical guarantees non-vacuous. The spectral clustering on the k-NN graph (Section~\ref{sec:geometric_design}) respects the intrinsic manifold topology, ensuring partition matroid clusters correspond to structurally distinct semantic strata rather than Euclidean regions that short-circuit across manifold gaps.

    \item \textbf{Misspecification robustness (Theorem~\ref{thm:robust_matroid}).} The Mat\'ern-$1/2$ kernel is chosen as the conservative surrogate for the true SQL-correctness surface. This follows the principle of safe over-parametrization, which implies that its RKHS contains the RKHS of every smoother kernel, so the true surface is guaranteed to lie within the assumed space regardless of its actual regularity. This is the correct direction of misspecification. An overly rough kernel leaves us safely over-parametrized, while an overly smooth kernel (e.g., SE) would exclude the true surface entirely and cause the spectral ratio $c_2/c_1$ in Theorem~\ref{thm:robust_matroid} to diverge. Theorem~\ref{thm:robust_matroid} formalizes the graceful-degradation guarantee, that is the additive spectral penalty $C = \frac{1}{2}\log(c_2/c_1)$ remains bounded because Mat\'ern-$1/2$'s heavier spectral tails dominate those of any smoother true kernel. All three evaluations confirm that Algorithm~\ref{alg:greedy} remains above Random across oracle retrieval, oracle LLM, and realistic LLM conditions.

    \item \textbf{GP trained on real labels vs.\ oracle labels.} The realistic GP experiment is the strictest test, because no ground-truth SQL is observed during selection, and every label carries LLM measurement noise. Despite this, Algorithm~\ref{alg:greedy} leads Random on SemanticScore from $n = 5$ onward (compared to the oracle experiment where baselines are competitive at $n \le 10$), and finishes with a Table Match Rate of $0.600$ vs.\ $0.400$, a $50\%$ relative gain. The earlier onset of the advantage is consistent with the GP receiving higher-variance labels that make the uncertainty landscape more informative for acquisition, specifically noise that hurts oracle-based methods by corrupting the label signal instead helps GP-guided acquisition by widening the posterior variance gap between well-covered and under-explored manifold regions, triggering more aggressive exploration under Algorithm~\ref{alg:greedy}.
\end{itemize}

\section{Conclusion}
We presented a Gaussian Process-based active learning framework for few-shot example selection in text-to-SQL systems. The ``curse of dimensionality,'' a standard obstacle for spatial statistics in high-dimensional spaces, is sidestepped by the low-dimensional intrinsic manifold structure of LLM embeddings. We formulated example selection as submodular mutual information maximization under a partition matroid constraint and derived approximation bounds under kernel misspecification. With a Mat\'ern-$1/2$ surrogate, our greedy acquisition retains a constant-factor guarantee on the true information gain regardless of the kernel mismatch.

On a production supply-chain dataset, the matroid-constrained selection reached 6/7 domain coverage at $n = 10$ vs.\ $n = 15$ for the best baseline, and finished with a Table Match Rate of $0.505$ vs.\ $0.355$ for Dist-to-Threshold at $n = 50$ under oracle labels. Under the more demanding realistic GP experiment, where labels come from online LLM scoring with no access to ground-truth test SQL, Algorithm~\ref{alg:greedy} achieves a Table Match Rate of $0.600$ vs.\ $0.400$ for Random and a Non-Seed Domain Score of $0.767$ vs.\ $0.517$ at $n = 50$. These gains come from structural coverage, not tuning. Specifically, the partition matroid forces the labeled bank to span the full semantic topology of the query space regardless of whether labels are oracle or LLM-generated. One natural extension is to update the manifold topology and matroid clusters as labeled data accumulates, so the partition adapts to discovered semantic structure rather than relying on the initial pool embedding.
\appendix

\section{Mathematical Background}
\label{app:background}
This section provides a concise formalization of the spectral and combinatorial foundations used in our derivations. For a detailed treatment, we refer the reader to \citet{stein1999interpolation} for kernel theory and \citet{krause2014submodular} for submodularity.

\subsection{Sobolev Spaces and Spectral Kernel Representation}
A stationary covariance kernel $K(x, x') = \Phi(h)$ with $h = x - x'$ is uniquely characterized by its spectral density $\hat{\Phi}(\omega)$ via Bochner's Theorem. We define the Fourier transform of the kernel as:
\begin{equation}
    \hat{\Phi}(\omega) = \frac{1}{(2\pi)^d} \int_{\mathbb{R}^d} \Phi(h) e^{-i \omega^T h} dh
\end{equation}
The spatial kernel is dually recovered via the inverse Fourier integral, $\Phi(h) = \int_{\mathbb{R}^d} \hat{\Phi}(\omega) e^{i \omega^T h} d\omega$. The smoothness of the functions in the induced Reproducing Kernel Hilbert Space (RKHS) is determined by the asymptotic decay of $\hat{\Phi}(\omega)$. Specifically, a kernel is of order $s$ if its spectral density satisfies $\hat{\Phi}(\omega) \asymp (1 + \|\omega\|^2)^{-s}$, inducing an RKHS norm-equivalent to the fractional Sobolev space $H^s(\mathbb{R}^d)$.

A primary example is the Matérn class of kernels, defined in the spectral domain as:
\begin{equation}
    \hat{\Phi}_{\text{Matérn}}(\omega) = \frac{\phi(2\sqrt{\pi})^d \Gamma(\nu + d/2)}{\Gamma(\nu)} (1 + \|\omega\|^2)^{-(\nu + d/2)}
\end{equation}
where $\nu$ is the smoothness parameter. Under this formulation, the Mat\'ern kernel corresponds to a Sobolev space of order $s = \nu + d/2$. This spectral decay governs the interpolation error. When restricted to a $d_I$-dimensional manifold $\mathcal{M}$, the approximation error scales with the geodesic fill distance $h_{\mathcal{S}, \mathcal{M}}$ at a rate determined by the Sobolev index $s$ and the intrinsic dimension $d_I$ \citep{fuselier2012scattered}.

\subsection{Matroids and Constrained Set Selection}
\label{app:matroids}
A matroid is a combinatorial structure abstracting linear independence, providing the setting in which greedy maximization of submodular functions admits constant-factor approximation guarantees. Formally, a matroid is a pair $M = (E, \mathcal{I})$ with finite ground set $E$ and family $\mathcal{I} \subseteq 2^E$ of \emph{independent sets} satisfying (i) the \emph{hereditary property}: $A \in \mathcal{I}$ and $B \subseteq A$ imply $B \in \mathcal{I}$; and (ii) the \emph{exchange property}: if $A, B \in \mathcal{I}$ with $|A| < |B|$, there exists $x \in B \setminus A$ such that $A \cup \{x\} \in \mathcal{I}$.

Two instances arise in our setting. The \emph{uniform matroid} $U_{n,k}$ has independent sets $\mathcal{I} = \{S \subseteq E : |S| \le k\}$, encoding a pure cardinality constraint. The \emph{partition matroid} is defined by a disjoint partition $E = \bigsqcup_{i=1}^K C_i$ with integer capacities $d_i \ge 0$, and has independent sets
\begin{equation}
    \mathcal{I} = \left\{ S \subseteq E \,:\, |S \cap C_i| \le d_i \,\,\, \forall i = 1, \dots, K \right\}.
\end{equation}
The partition matroid strictly generalizes the uniform matroid ($K = 1$, $d_1 = k$ recovers $U_{n,k}$). In our framework, the blocks $C_1, \dots, C_K$ correspond to semantic clusters on the embedding manifold, and the per-block capacity $d_i = 1$ enforces selection of at most one example per stratum.

\subsection{Entropy and Submodular Optimization}
For a continuous random variable $Y$ with probability density function $p(y)$, the differential entropy $H(Y)$ is defined as:
\begin{equation}
    H(Y) = -\int_{\mathcal{Y}} p(y) \log p(y) dy
\end{equation}
For a multivariate Gaussian random variable $Y_{\mathcal{S}} \sim \mathcal{N}(\mu, \Sigma_{\mathcal{S}})$ indexed by a set $\mathcal{S}$, this integral evaluates to:
\begin{equation}
    H(Y_{\mathcal{S}}) = \frac{1}{2}\log\left((2\pi e)^{|\mathcal{S}|} \det(\Sigma_{\mathcal{S}})\right)
\end{equation}
Mutual information between two random variables $Y_A$ and $Y_B$ is defined as the Kullback--Leibler (KL) divergence between their joint distribution and the product of their marginals:
\begin{equation}
    I(Y_A; Y_B) = D_{\mathrm{KL}}\!\left(p(Y_A, Y_B) \;\big\|\; p(Y_A)\,p(Y_B)\right)
\end{equation}
This quantity measures how much the joint distribution departs from independence. Expanding the KL divergence using the definition of conditional entropy yields the equivalent form $I(Y_A; Y_B) = H(Y_A) - H(Y_A \mid Y_B)$.

Applying this definition to our setting, we write the objective as $F(\mathcal{S}) = I(Y_{\tilde{\mathcal{M}}} ; Y_{\mathcal{S}})$, which quantifies how much the selected subset $\mathcal{S}$ reduces uncertainty about the random variables on the discretized manifold $\tilde{\mathcal{M}}$. This function is a monotone submodular set function, satisfying the property of diminishing returns: for all $A \subseteq B \subseteq V$ and $x \notin B$,
\begin{equation}
    F(A \cup \{x\}) - F(A) \ge F(B \cup \{x\}) - F(B)
\end{equation}
This property ensures that a greedy selection strategy under a partition matroid constraint $\mathcal{I}$, which restricts selection counts within disjoint query clusters, maintains a constant-factor $(1/2)$-approximation guarantee relative to the global optimum \citep{fisher1978analysis}.

\section{Proofs of Main Results}
\label{app:proofs}
\begin{proof}[Proof of Theorem \ref{thm:np_hard_hetero_mi}]
We demonstrate NP-completeness via reduction from the \textit{Multipartite Clique} problem. Let $G = (V, E)$ be a $K$-partite graph with disjoint independent sets $V_1, \dots, V_K$.

Membership in NP: Given a certificate subset $\mathcal{S}$, verifying the partition constraint requires linear time. Evaluating $I(Y_{\mathcal{S}}; Y_{\mathcal{C} \setminus \mathcal{S}})$ requires Cholesky decompositions of covariance matrices of maximum dimension $|\mathcal{C}| \times |\mathcal{C}|$, which executes in $\mathcal{O}(|\mathcal{C}|^3)$ polynomial time.

NP-Hardness: We map the graph directly to our active learning domain by setting the candidate pool $\mathcal{C} = V$ and the semantic clusters $C_i = V_i$. The partition matroid constraint $|\mathcal{S} \cap C_i| \le 1$ restricts selections to exactly one vertex per graph partition for a set of size $K$.

Following the standard quadratic construction \citep{ko1995exact}, we can generate a symmetric positive-definite matrix $\Sigma$ in polynomial time such that for any subset $\mathcal{S}$ of size $K$:
$$ \max_{\mathcal{S}} \det(\Sigma_{\mathcal{S}}) \begin{cases} 
\ge \tau & \text{if } G \text{ contains a } K\text{-partite clique} \\ 
\le \tau - \epsilon & \text{otherwise} 
\end{cases} $$
for a specific polynomial threshold $\tau$ and margin $\epsilon > 0$. We assign this $\Sigma$ as the assumed covariance $\mathcal{K}$ evaluated on $\mathcal{C}$.

To complete the reduction, we restrict the heteroscedastic noise to a uniform limit $\sigma^2(x) = \sigma^2 \to 0^+$. Under this limit, the Mutual Information for a $K$-element subset expands as:
\begin{align*}
    I(Y_{\mathcal{S}}; Y_{\mathcal{C} \setminus \mathcal{S}}) &= H(Y_{\mathcal{S}}) + H(Y_{\mathcal{C} \setminus \mathcal{S}}) - H(Y_{\mathcal{C}}) \\
    &= \frac{1}{2} \log \det(\Sigma_{\mathcal{S}} + \sigma^2 I) + C(\Sigma, \sigma^2)
\end{align*}
where $C(\Sigma, \sigma^2)$ depends on the full pool $\mathcal{C}$ but is independent of the specific choice of $\mathcal{S}$. For sufficiently small $\sigma^2$, the logarithmic sum is strictly monotonic with respect to $\det(\Sigma_{\mathcal{S}})$.

By setting the target information threshold to $M = \frac{1}{2}\log(\tau) + C(\Sigma, \sigma^2)$, we establish a strict bidirectional mapping:
$$ I(Y_{\mathcal{S}}; Y_{\mathcal{C} \setminus \mathcal{S}}) \ge M \iff \det(\Sigma_{\mathcal{S}}) \ge \tau $$
Since determining whether $\det(\Sigma_{\mathcal{S}}) \ge \tau$ determines the existence of a Multipartite Clique, finding a matroid-feasible set $\mathcal{S}$ satisfying $I \ge M$ is NP-hard.
\end{proof}

\begin{proof}[Proof of Lemma \ref{lemma:marginal_info_gain}]
By construction, $\bar{\mathcal{S}} = \mathcal{C} \setminus (\mathcal{S} \cup \{x\})$, so $(\mathcal{S} \cup \{x\}) \sqcup \bar{\mathcal{S}} = \mathcal{C}$ and $\mathcal{S} \sqcup (\bar{\mathcal{S}} \cup \{x\}) = \mathcal{C}$; both pairs partition the candidate pool $\mathcal{C}$. Let $\Delta F(x \mid \mathcal{S}) = F(\mathcal{S} \cup \{x\}) - F(\mathcal{S})$. By the definition of mutual information $I(A; B) = H(A) - H(A \mid B)$, the marginal gain is:
\begin{align}
    \Delta F(x \mid \mathcal{S}) &= \Big[ H(Y_{\mathcal{S} \cup \{x\}}) - H(Y_{\mathcal{S} \cup \{x\}} \mid Y_{\bar{\mathcal{S}}}) \Big] - \Big[ H(Y_{\mathcal{S}}) - H(Y_{\mathcal{S}} \mid Y_{\bar{\mathcal{S}} \cup \{x\}}) \Big] \label{eq:mi_def} \\
    &= H(Y_{\mathcal{S} \cup \{x\}}) - \Big( H(Y_{\mathcal{C}}) - H(Y_{\bar{\mathcal{S}}}) \Big) - H(Y_{\mathcal{S}}) + \Big( H(Y_{\mathcal{C}}) - H(Y_{\bar{\mathcal{S}} \cup \{x\}}) \Big) \label{eq:mi_sub_ident} \\
    &= \Big( H(Y_{\mathcal{S} \cup \{x\}}) - H(Y_{\mathcal{S}}) \Big) - \Big( H(Y_{\bar{\mathcal{S}} \cup \{x\}}) - H(Y_{\bar{\mathcal{S}}}) \Big) \label{eq:mi_group} \\
    &= H(Y_x \mid Y_{\mathcal{S}}) - H(Y_x \mid Y_{\bar{\mathcal{S}}}) \label{eq:mi_chain}
\end{align}

Under the assumed Gaussian Process prior, the assumed entropy is entirely determined by its variance: $H(Y_x \mid Y_{\mathcal{T}}) = \frac{1}{2}\log(2\pi e \cdot \text{Var}(Y_x \mid Y_{\mathcal{T}}))$. Therefore, 
\begin{align}
    \Delta F(x \mid \mathcal{S}) &= \frac{1}{2}\log \left( \frac{\text{Var}(Y_x \mid Y_{\mathcal{S}})}{\text{Var}(Y_x \mid Y_{\bar{\mathcal{S}}})} \right) \label{eq:mi_var_ratio}
\end{align}

For a Gaussian Process with base covariance matrix $\Sigma$ and a diagonal heteroscedastic noise matrix $\Lambda$ (where $\Lambda_{ii} = \sigma^2(x_i)$), the exact posterior predictive variances are given by the Schur complements of the block covariance matrix:
\begin{align}
    \text{Var}(Y_x \mid Y_{\mathcal{S}}) &= \Sigma_{xx} + \sigma^2(x) - \Sigma_{x\mathcal{S}} ( \Sigma_{\mathcal{S}\mathcal{S}} + \Lambda_{\mathcal{S}} )^{-1} \Sigma_{\mathcal{S}x} \label{eq:var_S_schur} \\
    \text{Var}(Y_x \mid Y_{\bar{\mathcal{S}}}) &= \Sigma_{xx} + \sigma^2(x) - \Sigma_{x\bar{\mathcal{S}}} ( \Sigma_{\bar{\mathcal{S}}\bar{\mathcal{S}}} + \Lambda_{\bar{\mathcal{S}}} )^{-1} \Sigma_{\bar{\mathcal{S}}x} \label{eq:var_Sbar_schur}
\end{align}
Defining $\delta_x$ as the ratio of Equation \ref{eq:var_S_schur} to Equation \ref{eq:var_Sbar_schur} and substituting it into Equation \ref{eq:mi_var_ratio} yields exactly $\frac{1}{2}\log(\delta_x)$. 
\end{proof}

\begin{proof}[Proof of Lemma \ref{lemma:submodularity}]
Let $\Delta F(x \mid \mathcal{S}) = F(\mathcal{S} \cup \{x\}) - F(\mathcal{S})$ denote the marginal information gain. From our earlier decomposition, this is equivalent to:
\begin{equation}
    \Delta F(x \mid \mathcal{S}) = H(Y_x \mid Y_{\mathcal{S}}) - H(Y_x \mid Y_{\bar{\mathcal{S}}})
\end{equation}
where $\bar{\mathcal{S}} = \mathcal{C} \setminus (\mathcal{S} \cup \{x\})$. To prove submodularity, we must show that for any $\mathcal{S} \subset \mathcal{S}' \subset \mathcal{C}$ and $x \in \mathcal{C} \setminus \mathcal{S}'$, the property of diminishing returns holds: $\Delta F(x \mid \mathcal{S}) \ge \Delta F(x \mid \mathcal{S}')$.

First, expanding the observation set from $\mathcal{S}$ to $\mathcal{S}'$ therefore decreases or maintains the uncertainty at $x$:
\begin{equation}
    H(Y_x \mid Y_{\mathcal{S}}) \ge H(Y_x \mid Y_{\mathcal{S}'}) \label{eq:sub1}
\end{equation}
Also, since $\mathcal{S} \subset \mathcal{S}'$, their respective unselected complements satisfy $\bar{\mathcal{S}'} \subset \bar{\mathcal{S}}$, which yields $H(Y_x \mid Y_{\bar{\mathcal{S}'}}) \ge H(Y_x \mid Y_{\bar{\mathcal{S}}})$, which negates to:
\begin{equation}
    - H(Y_x \mid Y_{\bar{\mathcal{S}}}) \ge - H(Y_x \mid Y_{\bar{\mathcal{S}'}}) \label{eq:sub2}
\end{equation}
Summing inequalities (\ref{eq:sub1}) and (\ref{eq:sub2}) yields $\Delta F(x \mid \mathcal{S}) \ge \Delta F(x \mid \mathcal{S}')$, proving $F$ is submodular. To verify strict submodularity, note that for Gaussian processes the posterior variance $\text{Var}(Y_x \mid Y_{\mathcal{S}}) = \mathcal{K}(x,x) + \sigma^2(x) - k_{\mathcal{S}}(x)^\top (\mathcal{K}_{\mathcal{S}\mathcal{S}} + \Lambda_{\mathcal{S}})^{-1} k_{\mathcal{S}}(x)$ is a strictly decreasing function of the observation set whenever $\sigma^2(x) > 0$. Therefore, when $\mathcal{S} \subsetneq \mathcal{S}'$ and $x \notin \mathcal{S}'$, both (\ref{eq:sub1}) and (\ref{eq:sub2}) are strict, so $F$ is strictly submodular.
\end{proof}

\vspace{1em}

\begin{proof}[Proof of Lemma \ref{lemma:monotonicity}]
To prove $\epsilon$-approximate monotonicity, we must bound the marginal gain $\Delta F = F(\mathcal{S} \cup \{x\}) - F(\mathcal{S}) = H(Y_x \mid Y_{\mathcal{S}}) - H(Y_x \mid Y_{\bar{\mathcal{S}}})$ from below by $-\epsilon$, where $\bar{\mathcal{S}} = \mathcal{C} \setminus (\mathcal{S} \cup \{x\})$.

Let $\Lambda_\mathcal{T} = \mathrm{diag}(\sigma^2(x_i))_{x_i \in \mathcal{T}}$ denote the heteroscedastic noise diagonal for any selection $\mathcal{T}$. Since $\Sigma_{\mathcal{T}\mathcal{T}} \succeq 0$ (as a kernel matrix) and $\Lambda_\mathcal{T} \succeq \sigma_{min}^2 I$, the observation covariance satisfies $\Sigma_{\mathcal{T}\mathcal{T}} + \Lambda_\mathcal{T} \succeq \sigma_{min}^2 I$, giving:
\begin{equation}
    \|(\Sigma_{\mathcal{T}\mathcal{T}} + \Lambda_\mathcal{T})^{-1}\|_2 \le \sigma_{min}^{-2}
\end{equation}

Let $\mathcal{C}_\delta \subset \mathcal{M}$ be a minimal $\delta$-net with respect to the geodesic distance $d_{\mathcal{M}}$. For an intrinsic dimension $d_I$, standard metric entropy bounds guarantee the covering number scales as $N(\delta, \mathcal{M}, d_{\mathcal{M}}) = \mathcal{O}(\delta^{-d_I})$ \citep{wainwright2019high}. By the $L$-Lipschitz continuity of $\mathcal{K}$, for any $u, u' \in \mathcal{M}$ such that $d_{\mathcal{M}}(u, u') \le \delta$, the covariance deviation satisfies:
\begin{equation}
    |\mathcal{K}(u, v) - \mathcal{K}(u', v)| \le L\delta \quad \forall v \in \mathcal{M}
\end{equation}

To bridge the conditioning sets $\mathcal{S}$ and $\bar{\mathcal{S}}$, we define a surjective mapping $\pi: \mathcal{S} \to \tilde{\mathcal{S}}$ onto a ``shadow set'' $\tilde{\mathcal{S}} \subset \mathcal{C}_\delta \setminus \mathcal{S}$. For every $s \in \mathcal{S}$, we choose its nearest neighbor $\pi(s) \in \mathcal{C}_\delta$ such that $d_{\mathcal{M}}(s, \pi(s)) \le \delta$. This yields $|\tilde{\mathcal{S}}| = |\mathcal{S}|$. Because $\mathcal{C}_\delta$ is sufficiently dense and can be constructed independently of the finite set $\mathcal{S}$, we guarantee $\tilde{\mathcal{S}} \cap \mathcal{S} = \emptyset$, implying $\tilde{\mathcal{S}} \subseteq \mathcal{C} \setminus (\mathcal{S} \cup \{x\}) = \bar{\mathcal{S}}$.

Let $\sigma^2_{x \mid \mathcal{T}} = \text{Var}(Y_x \mid Y_\mathcal{T}) = \mathcal{K}(x,x) + \sigma^2(x) - \Sigma_{x\mathcal{T}}(\Sigma_{\mathcal{T}\mathcal{T}} + \Lambda_\mathcal{T})^{-1}\Sigma_{\mathcal{T}x}$ denote the GP posterior variance, with Gaussian conditional entropy $H(Y_x \mid Y_\mathcal{T}) = \frac{1}{2}\log(2\pi e \cdot \sigma^2_{x \mid \mathcal{T}})$. Define the combined Lipschitz constant $\hat{L} = L + L_\sigma$. The Lipschitz bounds on $\mathcal{K}$ and $\sigma^2$ restrict the deviations of the cross-covariance vector and the observation covariance matrix:
\begin{align}
    \|\Sigma_{x\mathcal{S}} - \Sigma_{x\tilde{\mathcal{S}}}\|_2 &\le L\delta\sqrt{|\mathcal{S}|} \\
    \|(\Sigma_{\mathcal{S}\mathcal{S}} + \Lambda_\mathcal{S}) - (\Sigma_{\tilde{\mathcal{S}}\tilde{\mathcal{S}}} + \Lambda_{\tilde{\mathcal{S}}})\|_F &\le \hat{L}\delta|\mathcal{S}|
\end{align}

Let $M = \max_{x \in \mathcal{M}} \mathcal{K}(x, x)$, which implies the prior cross-covariance vector norms are bounded by $\|\Sigma_{x\mathcal{T}}\|_2 \le M\sqrt{|\mathcal{T}|}$ for any subset $\mathcal{T}$. 

Using the identity $A^{-1} - B^{-1} = A^{-1}(B - A)B^{-1}$, we bound the spectral norm of the inverse matrix difference:
\begin{align}
    &\|(\Sigma_{\mathcal{S}\mathcal{S}} + \Lambda_\mathcal{S})^{-1} - (\Sigma_{\tilde{\mathcal{S}}\tilde{\mathcal{S}}} + \Lambda_{\tilde{\mathcal{S}}})^{-1}\|_2 \nonumber \\
    &\quad\le \|(\Sigma_{\mathcal{S}\mathcal{S}} + \Lambda_\mathcal{S})^{-1}\|_2 \nonumber \\
    &\qquad\cdot \|(\Sigma_{\mathcal{S}\mathcal{S}} + \Lambda_\mathcal{S}) - (\Sigma_{\tilde{\mathcal{S}}\tilde{\mathcal{S}}} + \Lambda_{\tilde{\mathcal{S}}})\|_2 \nonumber \\
    &\qquad\cdot \|(\Sigma_{\tilde{\mathcal{S}}\tilde{\mathcal{S}}} + \Lambda_{\tilde{\mathcal{S}}})^{-1}\|_2 \nonumber \\
    &\quad\le (\sigma_{min}^{-2}) (\hat{L}\delta|\mathcal{S}|) (\sigma_{min}^{-2}) \nonumber \\
    &\quad= \hat{L}\delta\sigma_{min}^{-4}|\mathcal{S}|
\end{align}

We expand the conditional variance difference via a telescoping sum and apply the triangle inequality alongside the sub-multiplicative property of the spectral norm:
\begin{align}
    |\sigma^2_{x \mid \mathcal{S}} - \sigma^2_{x \mid \tilde{\mathcal{S}}}| &= \left| \Sigma_{x\mathcal{S}}(\Sigma_{\mathcal{S}\mathcal{S}} + \Lambda_\mathcal{S})^{-1}\Sigma_{\mathcal{S}x} - \Sigma_{x\tilde{\mathcal{S}}}(\Sigma_{\tilde{\mathcal{S}}\tilde{\mathcal{S}}} + \Lambda_{\tilde{\mathcal{S}}})^{-1}\Sigma_{\tilde{\mathcal{S}}x} \right| \nonumber \\
    &\le \|\Sigma_{x\mathcal{S}} - \Sigma_{x\tilde{\mathcal{S}}}\|_2 \|(\Sigma_{\mathcal{S}\mathcal{S}} + \Lambda_\mathcal{S})^{-1}\|_2 \|\Sigma_{\mathcal{S}x}\|_2 \nonumber \\
    &\quad + \|\Sigma_{x\tilde{\mathcal{S}}}\|_2 \|(\Sigma_{\mathcal{S}\mathcal{S}} + \Lambda_\mathcal{S})^{-1} - (\Sigma_{\tilde{\mathcal{S}}\tilde{\mathcal{S}}} + \Lambda_{\tilde{\mathcal{S}}})^{-1}\|_2 \|\Sigma_{\mathcal{S}x}\|_2 \nonumber \\
    &\quad + \|\Sigma_{x\tilde{\mathcal{S}}}\|_2 \|(\Sigma_{\tilde{\mathcal{S}}\tilde{\mathcal{S}}} + \Lambda_{\tilde{\mathcal{S}}})^{-1}\|_2 \|\Sigma_{\mathcal{S}x} - \Sigma_{\tilde{\mathcal{S}}x}\|_2 \nonumber \\
    &\le (L\delta\sqrt{|\mathcal{S}|})(\sigma_{min}^{-2})(M\sqrt{|\mathcal{S}|}) \nonumber \\
    &\quad + (M\sqrt{|\mathcal{S}|})(\hat{L}\delta\sigma_{min}^{-4}|\mathcal{S}|)(M\sqrt{|\mathcal{S}|}) \nonumber \\
    &\quad + (M\sqrt{|\mathcal{S}|})(\sigma_{min}^{-2})(L\delta\sqrt{|\mathcal{S}|}) \nonumber \\
    &\le \hat{L}\delta\sigma_{min}^{-2}|\mathcal{S}| \left( 2M + M^2\sigma_{min}^{-2}|\mathcal{S}| \right)
\end{align}

Under the partition matroid constraint, $|\mathcal{S}| < K$. Defining the geometric constant $C_0 = 2M + M^2\sigma_{min}^{-2}K$ yields the final perturbation bound:
\begin{equation}
    |\sigma^2_{x \mid \mathcal{S}} - \sigma^2_{x \mid \tilde{\mathcal{S}}}| \le C_0 \hat{L} \delta \sigma_{min}^{-2} |\mathcal{S}|
\end{equation}

The difference in conditional entropy expands logarithmically. By substituting the definition of Gaussian entropy, we isolate the variance perturbation:
\begin{align}
    |H(Y_x \mid Y_{\mathcal{S}}) - H(Y_x \mid Y_{\tilde{\mathcal{S}}})| &= \left| \frac{1}{2} \log \left( \frac{\sigma^2_{x \mid \mathcal{S}}}{\sigma^2_{x \mid \tilde{\mathcal{S}}}} \right) \right| \nonumber \\
    &= \frac{1}{2} \left| \log \left( 1 + \frac{\sigma^2_{x \mid \mathcal{S}} - \sigma^2_{x \mid \tilde{\mathcal{S}}}}{\sigma^2_{x \mid \tilde{\mathcal{S}}}} \right) \right|
\end{align}

Applying the standard logarithmic inequality $\log(1+z) \le z$ (and its symmetric counterpart for the absolute value), and noting that any conditional variance is strictly lower-bounded by the minimum irreducible noise ($\sigma^2_{x \mid \tilde{\mathcal{S}}} \ge \sigma_{min}^2$), we bound the entropy perturbation:
\begin{equation}
    |H(Y_x \mid Y_{\mathcal{S}}) - H(Y_x \mid Y_{\tilde{\mathcal{S}}})| \le \frac{1}{2} \frac{\left| \sigma^2_{x \mid \mathcal{S}} - \sigma^2_{x \mid \tilde{\mathcal{S}}} \right|}{\sigma_{min}^2} \le \frac{C_0 \hat{L} \delta \sigma_{min}^{-2}}{2\sigma_{min}^2} |\mathcal{S}|
\end{equation}

Unpacking this absolute value inequality ($|z| \le c \implies z \ge -c$) directly yields the required lower bound, cleanly sidestepping any loss of generality:
\begin{equation}
    H(Y_x \mid Y_{\mathcal{S}}) - H(Y_x \mid Y_{\tilde{\mathcal{S}}}) \ge - \frac{C_0 \hat{L} \delta \sigma_{min}^{-2}}{2\sigma_{min}^2} |\mathcal{S}|
\end{equation}

Because our shadow set is a strict subset of the unselected locations ($\tilde{\mathcal{S}} \subset \bar{\mathcal{S}}$), conditioning on the larger set $\bar{\mathcal{S}}$ reduces entropy more than conditioning on $\tilde{\mathcal{S}}$:
\begin{equation}
    H(Y_x \mid Y_{\bar{\mathcal{S}}}) \le H(Y_x \mid Y_{\tilde{\mathcal{S}}})
\end{equation}
which implies
\begin{equation}
    H(Y_x \mid Y_{\mathcal{S}}) - H(Y_x \mid Y_{\bar{\mathcal{S}}}) \ge - \frac{C_0 \hat{L} \delta \sigma_{min}^{-2}}{2\sigma_{min}^2} |\mathcal{S}|
\end{equation}

For any valid selection bounded by the matroid capacity $|\mathcal{S}| < K$, we can guarantee this negative penalty does not exceed $-\epsilon$ by setting the mesh width $\delta$ appropriately. Choosing $\delta \le \frac{2\epsilon \sigma_{min}^4}{C_0 \hat{L} K}$ enforces:
\begin{equation}
    F(\mathcal{S} \cup \{x\}) - F(\mathcal{S}) \ge -\epsilon
\end{equation}
Because the covering number scales with the intrinsic dimension $d_I$, this ensures approximate monotonicity while strictly avoiding the exponential computational scaling $\mathcal{O}(\delta^{-d})$ associated with the ambient dimension $d$.
\end{proof}

\vspace{1em}
\begin{proof}[Proof of Theorem \ref{thm:matroid_ideal}]
Let $\mathcal{I} = \{ \mathcal{T} \subset \mathcal{C} : |\mathcal{T} \cap C_i| \le 1 \,\, \forall i=1,\dots,K \}$ denote the independent sets of the partition matroid. Algorithm~\ref{alg:greedy} greedily maximizes $F(\mathcal{S})$ subject to $\mathcal{S} \in \mathcal{I}$.

From Lemma \ref{lemma:submodularity}, $F$ is submodular. From Lemma \ref{lemma:monotonicity}, $F$ is $\epsilon$-approximately monotonic for sets up to size $K$. We define an auxiliary objective function:
\begin{equation}
    F'(\mathcal{S}) = F(\mathcal{S}) + |\mathcal{S}|\epsilon
\end{equation}
Because $|\mathcal{S}|$ is a strictly modular cardinality function, $F'$ inherits submodularity from $F$. Note that the mutual information is normalized ($F(\emptyset) = 0$), so $F'(\emptyset) = 0$. 

For monotonicity, Lemma \ref{lemma:monotonicity} guarantees $\Delta F(x \mid \mathcal{S}) \ge -\epsilon$ for all $x \in \mathcal{C} \setminus \mathcal{S}$ with $|\mathcal{S}| < K$. Therefore, the marginal gain of the auxiliary function is:
\begin{equation*}
    \Delta F'(x \mid \mathcal{S}) = \Delta F(x \mid \mathcal{S}) + \epsilon \ge -\epsilon + \epsilon = 0
\end{equation*}
rendering $F'$ monotonically non-decreasing. 

Because $\Delta F'$ and $\Delta F$ differ only by the constant $\epsilon$ for all candidates $x$, the greedy choice at any step $i$ is identical under both functions:
\begin{equation}
    \arg\max_{x \in \mathcal{C}_{valid}} \Delta F'(x \mid \mathcal{S}_i) = \arg\max_{x \in \mathcal{C}_{valid}} \Delta F(x \mid \mathcal{S}_i)
\end{equation}
Therefore, the greedy set $\mathcal{S}_{greedy}$ produced by Algorithm~\ref{alg:greedy} is exactly identical to the set produced by maximizing $F'$.

\citet{fisher1978analysis} proved that maximizing a normalized, monotonic submodular function subject to a matroid constraint using a greedy algorithm yields a $1/2$ approximation ratio. Applying this bound to our auxiliary function gives:
\begin{equation}
    F'(\mathcal{S}_{greedy}) \ge \frac{1}{2} F'(\mathcal{S}^*)
\end{equation}

We assume the algorithm constructs a full basis of the matroid (selecting exactly one element from all $K$ clusters). Therefore, both the greedy selection $\mathcal{S}_{greedy}$ and the optimal selection $\mathcal{S}^*$ strictly exhaust the constraint, ensuring $|\mathcal{S}_{greedy}| = |\mathcal{S}^*| = K$. Substituting the definition of $F'$ back into the inequality yields:
\begin{align}
    F(\mathcal{S}_{greedy}) + K\epsilon &\ge \frac{1}{2} (F(\mathcal{S}^*) + K\epsilon) \nonumber \\
    F(\mathcal{S}_{greedy}) &\ge \frac{1}{2} F(\mathcal{S}^*) - \frac{1}{2} K\epsilon \nonumber \\
    F(\mathcal{S}_{greedy}) &\ge \frac{1}{2} (F(\mathcal{S}^*) - K\epsilon)
\end{align}
This bounds the worst-case degradation of the greedy algorithm against the global optimum under the partition matroid constraint.
\end{proof}

\begin{proof}[Proof of Lemma \ref{lemma:variance_mismatch}]
Let $\text{Var}_{\Phi}(Y_x \mid Y_{\mathcal{S}})$ and $\text{Var}_{\mathcal{K}^*}(Y_x \mid Y_{\mathcal{S}})$ denote the posterior predictive variances. By the spectral representation of Gaussian Processes, the posterior variance for target $x$ using weights $w$ is expressed via the error functional $\hat{E}_{x,\mathcal{S}}(\omega) = 1 - \sum_{i \in \mathcal{S}} w_i e^{-i\omega^T(x_i - x)}$ as:
\begin{equation}
    \text{Var}_K(Y_x \mid Y_{\mathcal{S}}) = \inf_{w} \left[ \int_{\mathbb{R}^d} |\hat{E}_{x,\mathcal{S}}(\omega)|^2 \hat{K}(\omega) d\omega + \sigma^2(x) + w^T \Lambda_{\mathcal{S}} w \right] \label{eq:spectral_var}
\end{equation}

Let $w^*$ and $w^\Phi$ be the optimal Kriging weights under $\mathcal{K}^*$ and $\Phi$, respectively. To bound the true variance from below, we evaluate the assumed variance using the suboptimal weights $w^*$:
\begin{align}
    \text{Var}_\Phi(Y_x \mid Y_{\mathcal{S}}) &\le \int_{\mathbb{R}^d} |\hat{E}^*(\omega)|^2 \hat{\Phi}(\omega) d\omega + \sigma^2(x) + (w^*)^T\Lambda_{\mathcal{S}} w^* \label{eq:var_upper_subopt} \\
    &\le c_2 \int_{\mathbb{R}^d} |\hat{E}^*(\omega)|^2 (1+\|\omega\|^2)^s \hat{\mathcal{K}}^*(\omega) d\omega + \sigma^2(x) + (w^*)^T\Lambda_{\mathcal{S}} w^* \label{eq:var_upper_assum} \\
    &\le c_2 \left[ \int_{\mathbb{R}^d} |\hat{E}^*(\omega)|^2 (1+\|\omega\|^2)^s \hat{\mathcal{K}}^*(\omega) d\omega + \sigma^2(x) + (w^*)^T\Lambda_{\mathcal{S}} w^* \right] \label{eq:var_upper_c2} \\
    &= c_2 \text{Var}_{\mathcal{K}^*}(Y_x \mid Y_{\mathcal{S}}) + c_2 \int_{\mathbb{R}^d} |\hat{E}^*(\omega)|^2 [(1+\|\omega\|^2)^s - 1] \hat{\mathcal{K}}^*(\omega) d\omega \label{eq:var_upper_split}
\end{align}
Inequality \ref{eq:var_upper_subopt} follows from the definition of the infimum in Equation \ref{eq:spectral_var}. Inequality \ref{eq:var_upper_assum} applies the upper spectral bound $\hat{\Phi} \le c_2(1+\|\omega\|^2)^s\hat{\mathcal{K}}^*$. Inequality \ref{eq:var_upper_c2} absorbs the kernel-independent noise terms into the bracket using $c_2 \ge 1$. Equation \ref{eq:var_upper_split} expands the polynomial. 

The residual integral in Equation \ref{eq:var_upper_split} represents the Sobolev interpolation error of the assumed operator in the true RKHS. Because our queries are restricted to the compact manifold $\mathcal{M}$, we can bound this high-frequency penalty using scattered data approximation on closed manifolds. By extending the classical misspecified Kriging bounds of \citet[Theorem 4.1]{stein1999interpolation} to our intrinsic setting via manifold Sobolev embedding \citep[Theorem 4.2]{fuselier2012scattered}, the error is guaranteed to decay with respect to the geodesic fill distance $h_{\mathcal{S}, \mathcal{M}}$. This convergence rate avoids the ambient dimension $d$ and scales strictly with the intrinsic dimension $d_I$ as $\mathcal{E}(h_{\mathcal{S}, \mathcal{M}}) = \mathcal{O}(h_{\mathcal{S}, \mathcal{M}}^{m-s-d_I/2})$. Rearranging Equation \ref{eq:var_upper_split} yields the lower bound on the true variance:
\begin{equation}
    \text{Var}_{\mathcal{K}^*}(Y_x \mid Y_{\mathcal{S}}) \ge \frac{1}{c_2} \text{Var}_\Phi(Y_x \mid Y_{\mathcal{S}}) - \mathcal{E}(h_{\mathcal{S}, \mathcal{M}}) \label{eq:true_var_lower}
\end{equation}

Conversely, we bound the true variance from above using the optimal weights of the assumed kernel, $w^\Phi$:
\begin{align}
    \text{Var}_\Phi(Y_x \mid Y_{\mathcal{S}}) &= \int_{\mathbb{R}^d} |\hat{E}^\Phi(\omega)|^2 \hat{\Phi}(\omega) d\omega + \sigma^2(x) + (w^\Phi)^T\Lambda_{\mathcal{S}} w^\Phi \nonumber \\
    &\ge c_1 \int_{\mathbb{R}^d} |\hat{E}^\Phi(\omega)|^2 \hat{\mathcal{K}}^*(\omega) d\omega + \sigma^2(x) + (w^\Phi)^T\Lambda_{\mathcal{S}} w^\Phi \label{eq:var_lower_assum} \\
    &\ge c_1 \left[ \int_{\mathbb{R}^d} |\hat{E}^\Phi(\omega)|^2 \hat{\mathcal{K}}^*(\omega) d\omega + \sigma^2(x) + (w^\Phi)^T\Lambda_{\mathcal{S}} w^\Phi \right] \label{eq:var_lower_c1} \\
    &\ge c_1 \text{Var}_{\mathcal{K}^*}(Y_x \mid Y_{\mathcal{S}}) \label{eq:var_lower_subopt}
\end{align}
where Inequality \ref{eq:var_lower_assum} applies the lower spectral bound $\hat{\Phi} \ge c_1\hat{\mathcal{K}}^*$. Inequality \ref{eq:var_lower_c1} absorbs the noise terms into the bracket using $c_1 \le 1$. Inequality \ref{eq:var_lower_subopt} holds because $w^\Phi$ is suboptimal for $\mathcal{K}^*$. This yields $\text{Var}_{\mathcal{K}^*}(Y_x \mid Y_{\bar{\mathcal{S}}}) \le \frac{1}{c_1} \text{Var}_\Phi(Y_x \mid Y_{\bar{\mathcal{S}}})$.

Finally, we translate these posterior variance bounds into the marginal Mutual Information gain, defined as $\Delta F_K(x \mid \mathcal{S}) = \frac{1}{2}\log \left( \text{Var}_K(Y_x \mid Y_{\mathcal{S}}) / \text{Var}_K(Y_x \mid Y_{\bar{\mathcal{S}}}) \right)$:
\begin{align}
    \Delta F_{\mathcal{K}^*}(x \mid \mathcal{S}) &\ge \frac{1}{2}\log \left( \frac{\frac{1}{c_2}\text{Var}_\Phi(Y_x \mid Y_{\mathcal{S}}) - \mathcal{E}}{\frac{1}{c_1}\text{Var}_\Phi(Y_x \mid Y_{\bar{\mathcal{S}}})} \right) \label{eq:mi_sub} \\
    &= \frac{1}{2}\log \left( \frac{c_1}{c_2} \frac{\text{Var}_\Phi(Y_x \mid Y_{\mathcal{S}})}{\text{Var}_\Phi(Y_x \mid Y_{\bar{\mathcal{S}}})} - \frac{c_1 \mathcal{E}}{\text{Var}_\Phi(Y_x \mid Y_{\bar{\mathcal{S}}})} \right) \nonumber \\
    &\ge \Delta F_\Phi(x \mid \mathcal{S}) - \frac{1}{2}\log(c_2/c_1) - \mathcal{E}'(h_{\mathcal{S}, \mathcal{M}}) \label{eq:mi_final}
\end{align}
Inequality \ref{eq:mi_sub} substitutes our derived lower bound (Equation \ref{eq:true_var_lower}) into the numerator and our derived upper bound into the denominator. Finally, Inequality \ref{eq:mi_final} factors out the assumed information gain $\Delta F_\Phi(x \mid \mathcal{S})$ and linearizes the residual term inside the logarithm. Because the heteroscedastic noise is strictly lower-bounded ($\text{Var}_\Phi \ge \sigma_{min}^2 > 0$), the denominator cannot vanish, allowing us to safely absorb the scaling constants and spatial residual into a well-behaved vanishing error term $\mathcal{E}'$. This proves the true information gain tracks the assumed gain up to a fixed additive spectral penalty and a vanishing geometric residual.
\end{proof}
\vspace{1em}

\begin{proof}[Proof of Theorem \ref{thm:robust_matroid}]
Let $\mathcal{S}_{greedy}$ be the set of $K$ queries selected by Algorithm~\ref{alg:greedy} by greedily maximizing the assumed objective $F_{\Phi}$ under the partition matroid constraint $\mathcal{I}$. Let $\mathcal{S}^*_{\Phi} = \arg\max_{\mathcal{S} \in \mathcal{I}} F_{\Phi}(\mathcal{S})$ be the optimal set under the assumed kernel, and $\mathcal{S}^*_{\mathcal{K}^*} = \arg\max_{\mathcal{S} \in \mathcal{I}} F_{\mathcal{K}^*}(\mathcal{S})$ be the optimal set under the true kernel.

From Theorem \ref{thm:matroid_ideal}, we know the greedy algorithm achieves a $1/2$ approximation ratio on the function it is actively optimizing ($F_{\Phi}$), modulo the discretization gap $K\epsilon$. Thus:
\begin{equation}
    F_{\Phi}(\mathcal{S}_{greedy}) \ge \frac{1}{2} F_{\Phi}(\mathcal{S}^*_{\Phi}) - \frac{1}{2}K\epsilon \label{eq:greedy_phi}
\end{equation}
Because $\mathcal{S}^*_{\Phi}$ is the global optimum for $F_{\Phi}$, it must yield a value at least as large as evaluating $F_{\Phi}$ on the true optimal set $\mathcal{S}^*_{\mathcal{K}^*}$:
\begin{equation}
    F_{\Phi}(\mathcal{S}^*_{\Phi}) \ge F_{\Phi}(\mathcal{S}^*_{\mathcal{K}^*}) \label{eq:opt_phi}
\end{equation}
Substituting (\ref{eq:opt_phi}) into (\ref{eq:greedy_phi}):
\begin{equation}
    F_{\Phi}(\mathcal{S}_{greedy}) \ge \frac{1}{2} F_{\Phi}(\mathcal{S}^*_{\mathcal{K}^*}) - \frac{1}{2}K\epsilon \label{eq:greedy_cross}
\end{equation}

We now apply the misspecification bound from Lemma \ref{lemma:variance_mismatch} by explicitly expanding the total information gain via the chain rule. Let $\mathcal{S}_{greedy} = \{x_1, \dots, x_K\}$ be the sequence of selected queries, and let $\mathcal{S}_i = \{x_1, \dots, x_{i-1}\}$ denote the subset selected prior to step $i$. Setting the additive spectral gap as $C = \frac{1}{2}\log(c_2/c_1)$, we sum the marginal gains over all $K$ steps:
\begin{align}
    F_{\mathcal{K}^*}(\mathcal{S}_{greedy}) &= \sum_{i=1}^K \Delta F_{\mathcal{K}^*}(x_i \mid \mathcal{S}_i) \nonumber \\
    &\ge \sum_{i=1}^K \left[ \Delta F_{\Phi}(x_i \mid \mathcal{S}_i) - C - \mathcal{E}'(h_{\mathcal{S}_i, \mathcal{M}}) \right] \label{eq:map_to_true_step} \\
    &= \sum_{i=1}^K \Delta F_{\Phi}(x_i \mid \mathcal{S}_i) - \sum_{i=1}^K C - \sum_{i=1}^K \mathcal{E}'(h_{\mathcal{S}_i, \mathcal{M}}) \nonumber \\
    &= F_{\Phi}(\mathcal{S}_{greedy}) - K C - \sum_{i=1}^K \mathcal{E}'(h_{\mathcal{S}_i, \mathcal{M}}) \label{eq:map_to_true}
\end{align}
where Inequality \ref{eq:map_to_true_step} substitutes the single-step lower bound from Lemma \ref{lemma:variance_mismatch}, and Equation \ref{eq:map_to_true} recombines the assumed marginal gains into the total assumed Mutual Information $F_{\Phi}(\mathcal{S}_{greedy})$.

To relate the right side of (\ref{eq:greedy_cross}) back to the true optimum, we apply an argument symmetric to Lemma \ref{lemma:variance_mismatch}. By reversing the roles of $\Phi$ and $\mathcal{K}^*$ using the spectral bounds $c_1 \leq \hat{\Phi}(\omega)/\hat{\mathcal{K}}^*(\omega) \leq c_2$, the assumed Mutual Information similarly dominates the true Mutual Information up to the same additive penalty $C$ and a symmetric residual $\mathcal{E}''$:
\begin{equation}
    F_{\Phi}(\mathcal{S}^*_{\mathcal{K}^*}) \ge F_{\mathcal{K}^*}(\mathcal{S}^*_{\mathcal{K}^*}) - K C - \sum_{i=1}^K \mathcal{E}''(h_{\mathcal{S}_i, \mathcal{M}}) \label{eq:opt_upper}
\end{equation}

Substituting (\ref{eq:opt_upper}) and (\ref{eq:greedy_cross}) into (\ref{eq:map_to_true}) chains the inequalities together:
\begin{align}
    F_{\mathcal{K}^*}(\mathcal{S}_{greedy}) &\ge \left[ \frac{1}{2} F_{\Phi}(\mathcal{S}^*_{\mathcal{K}^*}) - \frac{1}{2}K\epsilon \right] - K C - \sum_{i=1}^K \mathcal{E}'(h_{\mathcal{S}_i, \mathcal{M}}) \nonumber \\
    &\ge \frac{1}{2} \left[ F_{\mathcal{K}^*}(\mathcal{S}^*_{\mathcal{K}^*}) - K C - \sum_{i=1}^K \mathcal{E}''(h_{\mathcal{S}_i, \mathcal{M}}) \right] - \frac{1}{2}K\epsilon - K C - \sum_{i=1}^K \mathcal{E}'(h_{\mathcal{S}_i, \mathcal{M}}) \nonumber \\
    &= \frac{1}{2} \left( F_{\mathcal{K}^*}(\mathcal{S}^*_{\mathcal{K}^*}) - K\epsilon \right) - \frac{3}{2} K C - \left( \sum_{i=1}^K \mathcal{E}'(h_{\mathcal{S}_i, \mathcal{M}}) + \frac{1}{2}\sum_{i=1}^K \mathcal{E}''(h_{\mathcal{S}_i, \mathcal{M}}) \right)
\end{align}

Let $OPT_{\mathcal{K}^*} = F_{\mathcal{K}^*}(\mathcal{S}^*_{\mathcal{K}^*})$. Expanding the spectral penalty term gives $\frac{3}{2} K C = \frac{3}{2} K \left( \frac{1}{2}\log(c_2/c_1) \right) = \frac{3K}{4}\log(c_2/c_1)$. Setting $\Gamma = \frac{3K}{4}\log(c_2/c_1)$ and defining the accumulated spatial residual as $\mathcal{R}(h_{\mathcal{S}, \mathcal{M}}) = \sum_{i=1}^K \mathcal{E}'(h_{\mathcal{S}_i, \mathcal{M}}) + \frac{1}{2}\sum_{i=1}^K \mathcal{E}''(h_{\mathcal{S}_i, \mathcal{M}})$, the final bound simplifies to:
\begin{equation}
    F_{\mathcal{K}^*}(\mathcal{S}_{greedy}) \ge \frac{1}{2} (OPT_{\mathcal{K}^*} - K\epsilon) - \Gamma - \mathcal{R}(h_{\mathcal{S}, \mathcal{M}})
\end{equation}
proving the constant-factor approximation holds under a misspecified kernel, with a fixed additive spectral penalty and a spatial residual that vanishes as the manifold fill distance $h_{\mathcal{S},\mathcal{M}} \to 0$.
\end{proof}
\section*{Acknowledgments}
The author thanks Kangana Beri, Aaditya Shukla, and Jiaxiang Ren at NVIDIA for valuable discussions and feedback that improved this work.

\bibliographystyle{plainnat}
\bibliography{references}
\end{document}